\documentclass{article}

\usepackage{amssymb}
\usepackage{amsfonts}

\usepackage{tgtermes}
\usepackage{amsmath}
\usepackage{amsthm}  %
\usepackage{scalefnt,letltxmacro}
\LetLtxMacro{\oldtextsc}{\textsc}
\renewcommand{\textsc}[1]{\oldtextsc{\scalefont{1.10}#1}}
\usepackage[scaled=0.92]{PTSans}

\usepackage[usenames,dvipsnames]{xcolor}
\usepackage{soul}
\definecolor{shadecolor}{gray}{0.9}

\usepackage{afterpage}
\usepackage{framed}
\usepackage{nicefrac}
\usepackage{bm}
\usepackage{paralist}

\usepackage[colorinlistoftodos,
           prependcaption,
           textsize=small,
           backgroundcolor=yellow,
           linecolor=lightgray,
           bordercolor=lightgray]{todonotes}

\usepackage{lineno}

\usepackage{ragged2e}

\DeclareRobustCommand{\parhead}[1]{\textbf{#1}~}

\usepackage{graphicx}
\usepackage[labelfont=it,labelsep=period]{caption}
\usepackage[format=hang]{subcaption}
\usepackage{wrapfig}
\usepackage{placeins}

\usepackage{booktabs}
\usepackage{arydshln} %
\usepackage{multirow}

\usepackage{algorithm}
\usepackage{listings}
\usepackage{fancyvrb}
\fvset{fontsize=\normalsize}
\usepackage[noend]{algpseudocode}

\usepackage[colorlinks,linktoc=all]{hyperref}
\usepackage[all]{hypcap}
\hypersetup{citecolor=Violet}
\hypersetup{linkcolor=black}
\hypersetup{urlcolor=MidnightBlue}
\usepackage{url}

\usepackage[nameinlink]{cleveref}
\creflabelformat{equation}{#1#2#3}
\crefname{equation}{eq.}{eqs.}  
\Crefname{equation}{Eq.}{Eqs.}
\usepackage[acronym,smallcaps,nowarn]{glossaries}
\usepackage{listings}
\lstdefinestyle{alp_style}{
    commentstyle=\color{OliveGreen},
    numberstyle=\tiny\color{black!60},
    stringstyle=\color{BrickRed},
    basicstyle=\ttfamily\scriptsize,
    breakatwhitespace=false,
    breaklines=true,
    captionpos=b,
    keepspaces=true,
    numbers=none,
    numbersep=5pt,
    showspaces=false,
    showstringspaces=false,
    showtabs=false,
    tabsize=2
}
\lstset{style=alp_style}

\usepackage{lipsum}

\newcommand{\g}{\, | \,}

\DeclareMathOperator{\Cov}{Cov}
\DeclareMathOperator{\E}{\mathbb{E}}
\DeclareMathOperator{\Var}{Var}
\newcommand{\argmin}{\operatornamewithlimits{argmin}}

\def\bE{{\mathbb E}}

\def\f0{{\mathbf 0}}

\def\w{{\mathbf w}}
\def\md{{\mathrm d}}

\newtheorem{defn}{Definition}

\newtheorem{lemma}{Lemma}
\newtheorem{proposition}{Proposition}
\newtheorem{corollary}{Corollary}

\theoremstyle{definition}

\newtheorem{remark}{Remark}

\newacronym{ADVI}{advi}{automatic differentiation variational inference}

\newacronym{BBVI}{bbvi}{black-box variational inference}

\newacronym{CDF}{cdf}{cumulative distribution function}

\newacronym{DVAE}{dvae}{discrete variational autoencoder}

\newacronym{ELBO}{elbo}{evidence lower bound}
\newacronym{EM}{em}{expectation maximization}

\newacronym{HMC}{hmc}{{H}amiltonian {M}onte {C}arlo}

\newacronym{KL}{kl}{{K}ullback-{L}eibler}

\newacronym{LDA}{lda}{latent {D}irichlet allocation}
\newacronym{LSTM}{lstm}{long short-term memory}

\newacronym{MAP}{map}{\emph{maximum a posteriori}}
\newacronym{MCMC}{mcmc}{{M}arkov chain {M}onte {C}arlo}

\newacronym{SVI}{svi}{stochastic variational inference}

\newacronym{VAE}{vae}{variational autoencoder}
\newacronym{VEM}{vem}{variational expectation maximization}

\usepackage[final]{neurips_2020}

\usepackage[utf8]{inputenc} %
\usepackage[T1]{fontenc}    %
\usepackage{hyperref}       %
\usepackage{url}            %
\usepackage{booktabs}       %
\usepackage{amsfonts}       %
\usepackage{nicefrac}       %
\usepackage{microtype}      %

\usepackage[symbol]{footmisc}

\hypersetup{citecolor=Violet}
\hypersetup{linkcolor=black}
\hypersetup{urlcolor=MidnightBlue}

\title{VarGrad: A Low-Variance Gradient Estimator for Variational Inference}

\author{%
    Lorenz Richter\thanks{Equal contribution.}\\
    Freie Universit\"{a}t Berlin\\
    BTU Cottbus-Senftenberg\\ %
    dida Datenschmiede GmbH\\
    \texttt{lorenz.richter@fu-berlin.de}
    \And
    Ayman Boustati$^*$\\
    University of Warwick\\
    \texttt{a.boustati@warwick.ac.uk}
    \And
    Nikolas N\"{u}sken\\
    Universit\"{a}t Potsdam\\
    \texttt{nuesken@uni-potsdam.de}
    \And
    Francisco J.\ R.\ Ruiz\\
    DeepMind\\
    \texttt{franrruiz@google.com}
    \And
    \"Omer Deniz Akyildiz\\
    University of Warwick\\
    The Alan Turing Institute\\
    \texttt{omer.akyildiz@warwick.ac.uk}\\
}

\begin{document}

\maketitle

\begin{abstract}

We analyse the properties of an unbiased gradient estimator of the \gls{ELBO} for variational inference, based on the score function method with leave-one-out control variates. We show that this gradient estimator can be obtained using a new loss, defined as the variance of the log-ratio between the exact posterior and the variational approximation, which we call the \emph{log-variance loss}. Under certain conditions, the gradient of the log-variance loss equals the gradient of the (negative) \gls{ELBO}. We show theoretically that this gradient estimator, which we call \emph{VarGrad} due to its connection to the log-variance loss, exhibits lower variance than the score function method in certain settings, and that the leave-one-out control variate coefficients are close to the optimal ones. We empirically demonstrate that VarGrad offers a favourable variance versus computation trade-off compared to other state-of-the-art estimators on a discrete \gls{VAE}.
\end{abstract}

\glsresetall

\section{Introduction}
\label{sec:introduction}

Estimating the gradient of the expectation of a function is a problem with applications in many areas of machine learning, ranging from variational inference to reinforcement learning \citep{mohamed2019monte}. Different gradient estimators lead to different algorithms; two examples of estimators are the score function gradient \citep{williams1992simple} and the reparameterisation gradient \citep{kingma2014auto,rezende2014stochastic,titsias2014doubly}. Many recent works develop new estimators with different properties (such as their variance); see \Cref{sec:related} for a review.

We focus on variational inference ({VI}), where the goal is to approximate the posterior distribution $p(z \g x)$ of a model $p(x,z)$, where $x$ denotes the observations and $z$ refers to the latent variables of the model \citep{jordan1999introduction, blei2017variational}. {VI} approximates the posterior using a parameterised family of distributions $q_\phi(z)$, and finds the parameters $\phi$ by minimising the \gls{KL} divergence from $q_\phi(z)$ to $p(z \g x)$, i.e., $\textrm{\textsc{kl}}(q_\phi(z) \;||\; p(z \g x))$. Since the \gls{KL} is intractable, VI solves instead an equivalent problem that maximises the \gls{ELBO}, given by
\begin{equation}\label{eq:elbo}
    \textrm{\textsc{elbo}}(\phi) = \E_{q_{\phi}}\left[ \log \frac{p(x, z)}{q_{\phi}(z)}  \right].
\end{equation}
Thus, VI casts the inference problem as an optimisation problem, which can be solved with stochastic optimisation tools when the \gls{ELBO} is not available in closed form. In particular, VI forms a Monte Carlo estimator of the gradient of the \gls{ELBO}, $\nabla_\phi \textrm{\textsc{elbo}}(\phi)$.

In this paper, we analyse a multi-sample estimator of the gradient of the \gls{ELBO}. In particular, we focus on an estimator first introduced by \citet{salimans2014onusing} and \citet{kool2019buy}, which is based on the score function method \citep{williams1992simple} with leave-one-out control variates.

We first show the connection between this estimator and an alternative divergence measure between the variational distribution $q_\phi(z)$ and the exact posterior $p(z \g x)$. This divergence, which is different from the standard \gls{KL} used in variational inference, is defined as the variance, under some arbitrary distribution $r(z)$, of the log-ratio $\log \frac{q_{\phi}(z)}{p(z\g x)}$. We refer to this divergence as the \emph{log-variance loss}. Inspired by \citet{nusken2020solving}, we show that we recover the gradient estimator of \citet{salimans2014onusing} and \citet{kool2019buy} by taking the gradient with respect to the variational parameters $\phi$ of the log-variance loss and evaluating the result at $r(z)=q_{\phi}(z)$. Due to this property, we refer to the gradient estimator as \emph{VarGrad}. This property also suggests a simple algorithm for computing the gradient estimator, based on differentiating through the log-variance loss.

We then study the relationship between VarGrad and the score function estimator \citep{williams1992simple,carbonetto2009stochastic,paisley2012variational,ranganath2014black} with optimal control variate coefficients. We show that the control variate coefficients of VarGrad are close to the (intractable) optimal coefficients. Indeed, we show both theoretically and empirically that the difference between both is small in many cases; for example when the \gls{KL} from $q_{\phi}(z)$ to the posterior is either small or large, which is generally the case in the late and early stages of the optimisation, respectively. This explains the success of the VarGrad estimator in a variety of settings \citep{kool2019buy,kool2020estimating}.

Since it is based on the score function, VarGrad is a black-box, general purpose estimator because it makes no assumptions on the model $p(x,z)$, such as differentiability with respect to the latent variables $z$. It introduces no additional parameters to be tuned and it is not computationally expensive. In  \Cref{sec:experiments}, we show empirically that VarGrad exhibits a favourable variance versus computation trade-off compared to other unbiased gradient estimators, including the score function gradient with control variates \citep{williams1992simple,ranganath2014black}, \textsc{rebar} \citep{tucker2017rebar}, \textsc{relax} \citep{grathwohl2018backpropagation}, and \textsc{arm} \citep{yin2018arm}.

\section{Background}
\label{sec:background}

In this section, we introduce the notation and review one of the most relevant estimators in VI: the score function method. We also review its improved version based on leave-one-out control variates.

Consider a probabilistic model $p(x,z)$, where $z$ denotes the latent variables and $x$ are the observations. We are interested in computing the posterior $p(z\g x)=p(x,z)/p(x)$, where $p(x) = \int p(x,z) \;\md z$ is the marginal likelihood. 
For most models of interest, the posterior is intractable due to the intractability of the marginal likelihood, and we resort to an approximation.

Variational inference approximates the posterior $p(z\g x)$ with a parameterised family of distributions $q_\phi(z)$ (with $\phi \in \Phi$), called variational family. Variational inference finds the parameters $\phi^*$ that minimise the \gls{KL} divergence, $\phi^* = \argmin_{\phi \in \Phi} \textrm{\textsc{kl}}\left(q_\phi(z) \;||\; p(z \g x) \right)$. This optimisation problem is intractable because the \gls{KL} itself depends on the intractable posterior. Variational inference sidesteps this problem by maximising instead the \gls{ELBO} defined in \Cref{eq:elbo}, which is a lower bound on the marginal likelihood, since $\log p(x) = \textrm{\textsc{elbo}}(\phi) 
+ \textrm{\textsc{kl}}\left(q_\phi(z) \;||\; p(z \g x) \right)$. As the expectation in \Cref{eq:elbo} is typically intractable, variational inference uses stochastic optimisation to maximise the \gls{ELBO}. In particular, it forms unbiased Monte Carlo estimators of the gradient $\nabla_{\phi}\textrm{\textsc{elbo}}(\phi)$.

We next review the score function method, a Monte Carlo estimator commonly used in variational inference. Instead of the \gls{ELBO}, we focus on the gradients of the \gls{KL} divergence with respect to the variational parameters $\phi$. These gradients are equal to the gradients of the negative \gls{ELBO} because the marginal likelihood $p(x)$ does not depend on $\phi$; that is, $\nabla_{\phi} \textrm{\textsc{kl}}\left(q_\phi(z) \;||\; p(z \g x) \right) = -\nabla_{\phi}\textrm{\textsc{elbo}}(\phi)$.

The score function estimator \citep{williams1992simple,carbonetto2009stochastic,paisley2012variational,ranganath2014black}, also known as Reinforce, expresses the gradient as an expectation that depends on the log-ratio ${q_\phi(z)}/{p(x, z)}$ weighted by the score function $\nabla_\phi \log q_\phi(z)$. The resulting estimator is %
\begin{equation}
 \label{eq:score_function}
 \nabla_\phi \textrm{\textsc{kl}}\left[q_\phi(z) \;||\; p(z \g x) \right] \approx \widehat{g}_{\text{Reinforce}}(\phi) = \frac{1}{S} \sum_{s=1}^S \log\!\left(\frac{q_\phi(z^{(s)})}{p(x, z^{(s)})} \right)\! \nabla_\phi \log q_\phi(z^{(s)}),
\end{equation}
where $z^{(s)} \overset{\textrm{i.i.d.}}{\sim} q_\phi(z)$. Due to its high variance, the score function estimator requires additional tricks in practice; \citet{ranganath2014black} use Rao-Blackwellization and control variates. The control variates are multiples of the score function, $a\odot \nabla_\phi \log q_\phi(z)$, where $\odot$ denotes the Hadamard (element-wise) product and the coefficient $a$ is chosen to minimise the estimator variance.

\citet{salimans2014onusing} and \citet{kool2019buy} leverage the multi-sample estimator by using $S-1$ samples to compute the control variate coefficient $a$ and then average over the resulting estimators, obtaining a leave-one-out estimator
\begin{align}
    \label{eq:kool_estimator}
	\widehat{g}_{\text{LOO}}(\phi) = \frac{1}{S-1} \!\! \left( \sum_{s=1}^S f_{\phi}(z^{(s)}) \nabla_\phi \log q_\phi(z^{(s)})  - \bar{f}_{\phi}\sum_{s=1}^S \nabla_\phi \log q_\phi (z^{(s)}) \! \right)\!\!, \quad z^{(s)} \overset{\textrm{i.i.d.}}{\sim} q_{\phi}(z),
\end{align}
where for simplicity of notation we have defined $f_\phi(z)$ as the log-ratio $\log \frac{q_\phi(z)}{p(x, z)}$ and $\bar{f}_\phi$ as its empirical average (which is a multi-sample Monte Carlo estimate of the negative \gls{ELBO}), i.e.,
\begin{equation}
    \label{eq:definition_of_f}
    f_{\phi}(z) = \log \frac{q_\phi(z)}{p(x, z)}
    \quad \textrm{and} \quad
    \bar{f}_{\phi} = \frac{1}{S} \sum_{s=1}^{S}f_{\phi}(z^{(s)})\approx -\textrm{\textsc{elbo}}(\phi).
\end{equation}
The score function method makes no assumptions on the model $p(x, z)$ or the distribution $q_{\phi}(z)$; the only requirements are to be able to sample from $q_{\phi}(z)$ and to evaluate $\log q_\phi(z)$ and $\log p(x, z)$.

\section{The Log-Variance Loss and its Connection to VarGrad}
\label{sec:vargrad}

In this section, we show the connection between the leave-one-out estimator in \Cref{eq:kool_estimator} and a novel divergence, which we call the log-variance loss. We introduce the log-variance loss in \Cref{subsec:log_variance_loss} and show its connection to \Cref{eq:kool_estimator} in \Cref{subsec:vargrad}. We refer to the estimator in \Cref{eq:kool_estimator} as \emph{VarGrad}.

\subsection{The Log-Variance Loss}
\label{subsec:log_variance_loss}
The log-variance loss is defined as the variance, under some arbitrary distribution $r(z)$, of the log-ratio $\log \frac{q_{\phi}(z)}{p(z \g x)}$. It has the property of reproducing the gradients of the \gls{KL} divergence under certain conditions (see \Cref{proposition: equivalence log-variance relative entropy} for details). We next give the precise definition of the loss.
\begin{defn} For a given distribution $r(z)$, the log-variance loss $\mathcal{L}_r(\cdot)$ is given by
\begin{align}
    \mathcal{L}_{r}(q_\phi(z) \;||\; p(z \g x)) &= \frac{1}{2} \textnormal{Var}_{r}\left(\log\left(\frac{q_\phi(z)}{p(z \g x)}\right)\right). %
    \label{eq:log_variance_loss}
\end{align}
\end{defn}
We refer to the distribution $r(z)$ as the \emph{reference distribution} under which the discrepancy between $q_\phi(z)$
and the posterior $p(z \g x)$ is computed. When the support of the reference distribution contains the supports of $q_\phi(z)$ and $p(z \g x)$, \Cref{eq:log_variance_loss} is a divergence;\footnote{%
More technically, as we assume that $r(z)$, $p(z\g x)$, and $q_{\phi}(z)$ admit densities, it follows that measure-zero sets of $r(z)$ are necessarily measure-zero sets of $p(z\g x)$ and $q_{\phi}(z)$, implying that the divergence is well defined.}
it is zero if and only if $q_\phi(z) = p(z \g x)$.
The factor $1/2$ in \Cref{eq:log_variance_loss} is only included because it simplifies some expressions later in this section.

We next show that the gradient of the log-variance loss and the gradient of the standard \gls{KL} divergence coincide under certain conditions. In particular, taking the gradient of \Cref{eq:log_variance_loss} with respect to the variational parameters $\phi$ \emph{and then} evaluating the result for a reference distribution $r(z)=q_{\phi}(z)$ gives the gradient of the \gls{KL}. This property is detailed in \Cref{proposition: equivalence log-variance relative entropy}.
\begin{proposition}
    \label{proposition: equivalence log-variance relative entropy}
    The gradient with respect to $\phi$ of the log-variance loss, evaluated at $r(z)=q_{\phi}(z)$, equals the gradient of the \gls{KL} divergence,
	\begin{equation}
    	\nabla_\phi \mathcal{L}_r (q_\phi(z) \;||\; p(z \g x)) \Big\vert_{r = q_\phi} = \nabla_\phi \textrm{\textsc{kl}}(q_\phi(z) \;||\; p(z \g x)).
    	\label{eqn:PropositionEqn}
	\end{equation}
\end{proposition}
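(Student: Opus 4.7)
The plan is to exploit the simple structure of the loss: since the posterior factors as $p(z \g x) = p(x,z)/p(x)$, we can write $\log\frac{q_{\phi}(z)}{p(z \g x)} = f_{\phi}(z) + \log p(x)$, and the additive constant $\log p(x)$ drops out of the variance. Thus
\begin{equation*}
	\mathcal{L}_r(q_\phi \;||\; p(\cdot \g x)) = \tfrac{1}{2}\,\mathrm{Var}_r(f_{\phi}(z)),
\end{equation*}
so the loss can be expressed purely in terms of the joint $p(x,z)$, which is model-accessible, rather than the intractable posterior. This observation is essentially why the result is useful in practice, and it also cleans up the calculation to follow.

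Next I would compute $\nabla_\phi \mathcal{L}_r$ treating $r$ as a fixed distribution independent of $\phi$ — this is crucial, since the gradient is taken \emph{before} the substitution $r=q_\phi$. Writing $\mu_r(\phi) := \mathbb{E}_r[f_\phi(z)]$, differentiation under the expectation gives
\begin{equation*}
	\nabla_\phi \mathcal{L}_r = \mathbb{E}_r\bigl[(f_\phi(z) - \mu_r(\phi))\,\nabla_\phi f_\phi(z)\bigr],
\end{equation*}
where the $\nabla_\phi \mu_r$ contribution vanishes because $\mathbb{E}_r[f_\phi - \mu_r] = 0$. Since $p(x,z)$ does not depend on $\phi$, we have $\nabla_\phi f_\phi(z) = \nabla_\phi \log q_\phi(z)$.

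Now I would set $r = q_\phi$ and invoke the standard score-function identity $\mathbb{E}_{q_\phi}[\nabla_\phi \log q_\phi(z)] = 0$ to drop the centering term entirely, yielding
\begin{equation*}
	\nabla_\phi \mathcal{L}_r(q_\phi \;||\; p(\cdot \g x))\Big|_{r=q_\phi} = \mathbb{E}_{q_\phi}\bigl[f_\phi(z)\,\nabla_\phi \log q_\phi(z)\bigr].
\end{equation*}
For the right-hand side of \eqref{eqn:PropositionEqn}, the usual score-function derivation of the KL gradient gives the same expression: applying the log-derivative trick to $\mathbb{E}_{q_\phi}[\log(q_\phi/p(\cdot \g x))]$ produces a term $\mathbb{E}_{q_\phi}[\nabla_\phi \log q_\phi]$ which is zero, leaving $\mathbb{E}_{q_\phi}[(f_\phi(z)+\log p(x))\nabla_\phi \log q_\phi(z)] = \mathbb{E}_{q_\phi}[f_\phi(z)\nabla_\phi \log q_\phi(z)]$, again using the score identity to kill the constant. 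Matching the two expressions concludes the proof.

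The calculation is short; the only conceptual pitfall — and where I would be careful in writing it up — is the order of operations. Substituting $r = q_\phi$ inside the variance \emph{before} differentiating would introduce spurious $\phi$-dependence through $r$ and yield a different quantity. I would therefore state explicitly that the notation $\nabla_\phi\mathcal{L}_r |_{r=q_\phi}$ means differentiate with $r$ held fixed, then evaluate. Mild regularity conditions (dominated convergence to justify swapping $\nabla_\phi$ and $\mathbb{E}_r$, and common support so the log-ratio is well-defined $r$-a.s.) should be noted but are standard in this setting.
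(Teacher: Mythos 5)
Your proof is correct and follows essentially the same route as the paper's: differentiate the variance with $r$ held fixed, then set $r=q_\phi$ and use $\int \nabla_\phi q_\phi(z)\,\mathrm{d}z = 0$ to kill the cross/centering terms, matching the result to the standard score-function form of the KL gradient. The only cosmetic differences are that you work with the centered form of the variance and discard $\log p(x)$ up front (which the paper does in the main text rather than in the proof itself).
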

\begin{proof}
See \Cref{proof:PropEquivalence}.
\end{proof}

\Cref{proposition: equivalence log-variance relative entropy} implies that we can estimate the gradient of the \gls{KL} divergence by estimating instead the gradient of the log-variance loss.

\begin{remark}
\label{remark: dont differentiate through r} 
The result in \Cref{proposition: equivalence log-variance relative entropy} is obtained by setting $r(z)=q_{\phi}(z)$ \emph{after} taking the gradient with respect to $\phi$. The same result does not hold if we set $r(z)=q_{\phi}(z)$ before differentiating.
\end{remark}

\subsection{VarGrad: Derivation of the Gradient Estimator from the Log-Variance Loss}
\label{subsec:vargrad}

The leave-one-out estimator in \Cref{eq:kool_estimator} \citep{salimans2014onusing,kool2019buy} is connected to the log-variance loss from \Cref{subsec:log_variance_loss} through \Cref{proposition: equivalence log-variance relative entropy}.
Firstly, note that the log-variance loss is intractable as it depends on the posterior $p(z \g x)$. However, since the marginal likelihood $p(x)$ has zero variance, it can be dropped from the definition in \Cref{eq:log_variance_loss}, yielding 
\begin{equation}
    \mathcal{L}_{r}(q_\phi(z) \;||\; p(z \g x)) = %
    \frac{1}{2} \text{Var}_{r}\left(\log\left(\frac{q_\phi(z)}{p(x, z)}\right)\right) =
    \frac{1}{2} \text{Var}_{r}\left(f_\phi(z) \right),
\end{equation}
where $f_\phi(z)$ is defined in \Cref{eq:definition_of_f}.

Next, we build the estimator of the log-variance loss as the empirical variance of $S$ Monte Carlo samples,
\begin{equation}
    \label{eq:log_variance_estimator}
    \mathcal{L}_{r}(q_\phi(z) \;||\; p(z \g x)) \approx \frac{1}{2(S-1)} \sum_{s=1}^{S} \left( f_{\phi}(z^{(s)}) - \bar{f}_{\phi} \right)^2,\quad z^{(s)} \overset{\textrm{i.i.d.}}{\sim} r(z).
\end{equation}
Applying \Cref{proposition: equivalence log-variance relative entropy} by differentiating through \Cref{eq:log_variance_estimator}, we arrive at the VarGrad estimator, $\widehat{g}_{\text{VarGrad}}(\phi) = \widehat{g}_{\text{LOO}}(\phi) \approx \nabla_\phi \textrm{\textsc{kl}}\left(q_\phi(z) \;||\; p(z \g x) \right)$, where
\begin{align}
\label{eq:vargrad}
	\widehat{g}_{\text{VarGrad}}(\phi) = \frac{1}{S-1} \!\! \left( \sum_{s=1}^S f_{\phi}(z^{(s)}) \nabla_\phi \log q_\phi(z^{(s)})  - \bar{f}_{\phi}\sum_{s=1}^S \nabla_\phi \log q_\phi (z^{(s)}) \! \right)\!\! 
\end{align}
and  $z^{(s)} \overset{\textrm{i.i.d.}}{\sim} q_{\phi}(z)$. 

The expression for VarGrad in \Cref{eq:vargrad} is identical to that of the leave-one-out estimator in \Cref{eq:kool_estimator}. Thus, VarGrad is an unbiased estimator of the gradient of the \gls{KL} (and equivalently the gradient of the \gls{ELBO}). From a probabilistic programming perspective, setting the reference $r(z)=q_{\phi}(z)$ \textit{after} differentiating w.r.t. $\phi$ amounts to sampling $z^{(s)} \sim q_{\phi}(z)$ and detaching the resulting samples from the computational graph. This suggests a novel algorithmic procedure, given in \Cref{alg:vargrad}. Its implementation is simple: we only need the samples $z^{(s)}\sim q_{\phi}(z)$ and apply the \verb+stop_gradient+ operator, evaluate the log-ratio $f_{\phi}(z^{(s)})$ for each sample, and then differentiate through the empirical variance of this log-ratio.

\begin{algorithm}[th]
  \renewcommand{\algorithmicrequire}{\textbf{Input:}}
  \renewcommand{\algorithmicensure}{\textbf{Output:}}
  \caption{Pseudocode for VarGrad\label{alg:vargrad}}
  \begin{algorithmic}
    \Require{Variational parameters $\phi$, data $x$}
    \For{$s=1, \ldots,  S$}
      \State $z^{(s)} \gets \verb+sample+(q_\phi(\cdot))$ \Comment{Sample from the approximate posterior}
      \State $z^{(s)} \gets \verb+stop_gradient+(z^{(s)})$ \Comment{Detach the samples from the computational graph}
      \State $f_\phi^{(s)} \gets \log q_\phi(z^{(s)}) - \log p(x, z^{(s)})$ \Comment{An estimate of the negative \acrshort{ELBO}}
    \EndFor
    \State $\widehat{\mathcal{L}} \gets$ $\frac{1}{2}\verb+Variance+(\{f_\phi^{(s)} \}_{s=1}^{S}$)
    \Comment{An estimate of the log-variance loss}
    \State \Return{\verb+grad+$(\widehat{\mathcal{L}})$}
    \Comment{Differentiate through the loss w.r.t.\ $\phi$}
  \end{algorithmic}
\end{algorithm}

\section{Analytical Results}
\label{sec:analytical}

In this section we study the properties of $\widehat{g}_{\text{VarGrad}}$ in 
comparison to other estimators based on the score function method. In \Cref{sec:CV}, we analyse the difference $\delta^{\text{CV}}$ between the control variate coefficient of VarGrad (called $a^{\text{VarGrad}}$) and the optimal one. The former can be approximated cheaply and unbiasedly, while a standard Monte Carlo estimator of the latter is biased and often exhibits high variance. Furthermore, we establish that the difference $\delta^{\text{CV}}$ is negligible in certain settings, in particular when $\textrm{\textsc{kl}}(q_\phi(z) \;||\; p(z \g x))$ is either very large or close to zero; thus in these settings the control variate coefficient of VarGrad is close to the optimal coefficient.
In \Cref{sec:variance} we show that a simple relation between $\delta^{\text{CV}}$ and the \gls{ELBO} is sufficient to \emph{guarantee} that $\widehat{g}_{\text{VarGrad}}$ has lower variance than $\widehat{g}_{\text{Reinforce}}$ when the number of Monte Carlo samples is large enough.

\subsection{Analysis of the Control Variate Coefficients}
\label{sec:CV}
As alluded to in \Cref{sec:background}, \citet{ranganath2014black} proposed to modify $\widehat{g}_{\text{Reinforce}}$ using a score function control variate, that is,
\begin{equation}
    \label{eq:g_cv}
    \widehat{g}_{\text{CV}}(\phi) = \widehat{g}_{\text{Reinforce}}(\phi) - a \odot \left( \frac{1}{S} \sum_{s=1}^S \nabla_{\phi} \log q_{\phi} (z^{(s)})\right),
\end{equation}
where $a$ is a vector chosen so as to reduce the variance of the estimator. We recover VarGrad (\Cref{eq:vargrad}), up to a factor of proportionality, by setting the control variate coefficient $a= \bar{f}_{\phi} \mathbf{1}$ in \Cref{eq:g_cv}, where $\mathbf{1}$ is a vector of ones. The proportionality relation is $\frac{S-1}{S} \widehat{g}_{\text{CV}} = \widehat{g}_{\text{VarGrad}}$.
In terms of variance reduction, the coefficients of the optimal $a^*$ are given by
\begin{equation}
\label{eqn: optimal control variate}
a_i^* =  \frac{\Cov_{q_\phi} \left(f_{\phi} \partial_{\phi_i} \log q_\phi, \partial_{\phi_i} \log q_\phi \right)}{\Var_{q_\phi} \left( \partial_{\phi_i} \log q_\phi \right)}.
\end{equation}
We next show that the coefficients of VarGrad, $a^{\text{VarGrad}}$, are close to the optimal coefficients $a^*$. For this, we first relate $a^{\text{VarGrad}}$ to $a^*$ in \Cref{lemma: optimal control variate decomposition}.

\begin{lemma}
\label{lemma: optimal control variate decomposition}
We can write the optimal control variate coefficient as the expected value of $a^{\text{VarGrad}}$ plus a \emph{control variate correction} term $\delta^{\text{CV}}$, i.e.,
\begin{equation}
a^* = \mathbb{E}_{q_\phi}[a^{\text{VarGrad}}] + \delta^{\text{CV}} = -\mathrm{ELBO}(\phi) + \delta^{\text{CV}}, 
\end{equation}
where $a^{\text{VarGrad}} = \bar{f}_\phi$ and the components of the  correction term $\delta^{\text{CV}}$ are given by
\begin{equation}
\label{eqn: delta_CV}
\delta^{\text{CV}}_i = \frac{\Cov_{q_\phi}\left( f_\phi,\left(\partial_{\phi_i} \log q_\phi\right)^2\right)}{\Var_{q_\phi} \left( \partial_{\phi_i} \log q_\phi \right)}.
\end{equation}
\end{lemma}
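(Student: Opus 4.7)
The plan is to proceed by direct algebraic manipulation of the expression for $a_i^*$ in \Cref{eqn: optimal control variate}, using as the single key tool the score function identity $\mathbb{E}_{q_\phi}[\partial_{\phi_i} \log q_\phi] = 0$ (valid under mild regularity so that one may interchange $\nabla_\phi$ and $\int$, which we assume throughout).

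First I would expand the numerator of $a_i^*$ as
\begin{equation*}
\Cov_{q_\phi}(f_\phi \, \partial_{\phi_i} \log q_\phi, \, \partial_{\phi_i} \log q_\phi) = \mathbb{E}_{q_\phi}\!\left[f_\phi (\partial_{\phi_i} \log q_\phi)^2\right] - \mathbb{E}_{q_\phi}\!\left[f_\phi \, \partial_{\phi_i} \log q_\phi\right] \mathbb{E}_{q_\phi}\!\left[\partial_{\phi_i} \log q_\phi\right],
\end{equation*}
and note that the second term vanishes by the score function identity. Similarly, the denominator simplifies to $\Var_{q_\phi}(\partial_{\phi_i} \log q_\phi) = \mathbb{E}_{q_\phi}[(\partial_{\phi_i}\log q_\phi)^2]$. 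Hence $a_i^* = \mathbb{E}_{q_\phi}[f_\phi (\partial_{\phi_i} \log q_\phi)^2] / \mathbb{E}_{q_\phi}[(\partial_{\phi_i}\log q_\phi)^2]$.

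Next I would add and subtract $\mathbb{E}_{q_\phi}[f_\phi] \, \mathbb{E}_{q_\phi}[(\partial_{\phi_i}\log q_\phi)^2]$ inside that fraction's numerator, which yields
\begin{equation*}
a_i^* = \mathbb{E}_{q_\phi}[f_\phi] + \frac{\mathbb{E}_{q_\phi}[f_\phi (\partial_{\phi_i} \log q_\phi)^2] - \mathbb{E}_{q_\phi}[f_\phi]\,\mathbb{E}_{q_\phi}[(\partial_{\phi_i}\log q_\phi)^2]}{\mathbb{E}_{q_\phi}[(\partial_{\phi_i}\log q_\phi)^2]}.
\end{equation*}
The fraction is, by the same score-mean-zero identity applied to rewrite the denominator as a variance, exactly $\delta^{\text{CV}}_i$ in \Cref{eqn: delta_CV}. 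To conclude I would identify the leading term: since $\bar{f}_\phi = \frac{1}{S}\sum_s f_\phi(z^{(s)})$ with $z^{(s)} \sim q_\phi$ i.i.d., linearity of expectation gives $\mathbb{E}_{q_\phi}[a^{\text{VarGrad}}] = \mathbb{E}_{q_\phi}[f_\phi] = \mathbb{E}_{q_\phi}[\log q_\phi(z) - \log p(x,z)] = -\textrm{\textsc{elbo}}(\phi)$, which is the claimed identity.

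I do not anticipate a genuine obstacle: the entire argument is a two-line rearrangement together with the score identity and the definition of $f_\phi$. The only subtlety worth flagging in the write-up is the implicit regularity condition that permits $\mathbb{E}_{q_\phi}[\partial_{\phi_i}\log q_\phi]=0$ (i.e.\ differentiation under the integral sign), which is standard in the score-function literature and consistent with the assumptions already in force in \Cref{sec:background}.
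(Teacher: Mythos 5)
Your proposal is correct and follows essentially the same route as the paper's own proof: both reduce $a_i^*$ to $\mathbb{E}_{q_\phi}[f_\phi(\partial_{\phi_i}\log q_\phi)^2]/\mathbb{E}_{q_\phi}[(\partial_{\phi_i}\log q_\phi)^2]$ via the score identity $\mathbb{E}_{q_\phi}[\partial_{\phi_i}\log q_\phi]=0$ and then add and subtract $\mathbb{E}_{q_\phi}[f_\phi]\,\mathbb{E}_{q_\phi}[(\partial_{\phi_i}\log q_\phi)^2]$ in the numerator. Your final identification $\mathbb{E}_{q_\phi}[\bar f_\phi]=\mathbb{E}_{q_\phi}[f_\phi]=-\textrm{\textsc{elbo}}(\phi)$ matches the paper's closing step.
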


\begin{proof}
See \Cref{proof: optimal control variate decomposition}.
\end{proof}

According to \Cref{lemma: optimal control variate decomposition}, the difference between the optimal control variate coefficient and the (expected) VarGrad coefficient is equal to the correction $\delta^{\text{CV}}$. We hypothesise that direct Monte Carlo estimation of $\delta^{\text{CV}}$ in \Cref{eqn: delta_CV} (or similarly for \Cref{eqn: optimal control variate}) suffers from high variance because it takes the form of a fraction\footnote{
    Monte Carlo estimators of fractions are not straightforward. As a simple example, consider the ratio of two independent Gaussian random variables, each with zero mean and unit variance. The ratio follows a Cauchy distribution, which has infinite variance. 
}
(see for instance \Cref{app:gaussians}).
Moreover, estimating \Cref{eqn: delta_CV} by taking the ratio of two Monte Carlo estimators gives a biased estimate.

We next show that in certain settings the correction term $\delta^{\text{CV}}$
becomes negligible, implying that $\widehat{g}_{\text{VarGrad}}$ and $\widehat{g}_{\text{Reinforce}}$ equipped with the optimal control variate coefficients behave almost identically. We provide empirical evidence of this finding in \Cref{sec:experiments} (and in \Cref{app:gaussians} for the Gaussian case).

\begin{proposition}
[$\delta^{\text{CV}}$ is small in comparison to $\mathbb{E}_{q_\phi}[a^{\text{VarGrad}}{]}$ if the \gls{KL} divergence between $q_\phi(z)$ and $p(z \g x)$ is large or small]
\label{prop:delta small}
Assume that $q_\phi(z)$ has lighter tails than the posterior $p(z \g x)$, in the sense that there exists a constant $C>0$ such that
\begin{equation}
\label{eq:underspread}
\sup_{z} \frac{q_\phi(z)}{p(z \g x)} < C.
\end{equation}
Furthermore, define the kurtosis of the score function,
\begin{equation}
\mathrm{Kurt}[\partial_{\phi_i} \log q_\phi] = \frac{\mathbb{E}_{q_\phi}[(\partial_{\phi_i} \log q_\phi)^4]}{(\mathbb{E}_{q_\phi}[(\partial_{\phi_i} \log q_\phi)^2])^2},
\end{equation}
and assume that it is bounded, $\mathrm{Kurt}[\partial_{\phi_i} \log q_\phi] < \infty$. Then, the ratio between the control variate correction $\delta^{\text{CV}}$ and the expected control variate coefficient of VarGrad can be upper bounded by
\begin{equation}
\label{eq:cor bound}
\left\vert  \frac{\delta^{\text{CV}}_i}{\mathbb{E}_{q_\phi}[a^{\text{VarGrad}}]}\right\vert \le \frac{2\sqrt{C \, \mathrm{Kurt}[\partial_{\phi_i} \log q_\phi]}}{\left\vert \sqrt{\textrm{\textsc{kl}}(q_\phi(z) \;||\; p(z \g x))} -\frac{ \log p(x)}{\sqrt{\textrm{\textsc{kl}}(q_\phi(z) \;||\; p(z \g x))}}\right\vert }.
\end{equation}
\label{prop: Cov neglectable for large D}
\end{proposition}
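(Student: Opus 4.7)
My plan is to combine Cauchy--Schwarz on the covariance in $\delta^{\text{CV}}_i$ with two elementary pointwise inequalities that translate $\Var_{q_\phi}(f_\phi)$ into a multiple of $\textrm{\textsc{kl}}(q_\phi(z) \;||\; p(z \g x))$, exploiting the bounded-ratio assumption \eqref{eq:underspread}.

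First, using the identity $\Cov_{q_\phi}(X,Y) = \mathbb{E}_{q_\phi}[(X - \mathbb{E}_{q_\phi}X)\,Y]$ with $X = f_\phi$ and $Y = (\partial_{\phi_i} \log q_\phi)^2$, I would apply Cauchy--Schwarz to obtain $|\Cov_{q_\phi}(f_\phi, (\partial_{\phi_i} \log q_\phi)^2)| \le \sqrt{\Var_{q_\phi}(f_\phi)}\,\sqrt{\mathbb{E}_{q_\phi}[(\partial_{\phi_i} \log q_\phi)^4]}$. Dividing by $\Var_{q_\phi}(\partial_{\phi_i} \log q_\phi) = \mathbb{E}_{q_\phi}[(\partial_{\phi_i} \log q_\phi)^2]$ (the score has zero mean) and recognising the kurtosis then yields $|\delta^{\text{CV}}_i| \le \sqrt{\Var_{q_\phi}(f_\phi)\cdot \mathrm{Kurt}[\partial_{\phi_i} \log q_\phi]}$.

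Next, I would bound $\Var_{q_\phi}(f_\phi)$ in terms of the KL divergence. Since $f_\phi$ differs from $\log(q_\phi(z)/p(z\g x))$ only by the additive constant $-\log p(x)$, setting $\rho(z) = q_\phi(z)/p(z\g x)$ and changing measure gives $\Var_{q_\phi}(f_\phi) \le \mathbb{E}_{q_\phi}[(\log \rho)^2] = \mathbb{E}_{p(\cdot \g x)}[\rho (\log \rho)^2]$. I would then establish two pointwise inequalities on $(0,C]$: (i) $v(\log v)^2 \le (v-1)^2$, which after the substitution $s = \sqrt v$ reduces to $|s - s^{-1}| \ge 2|\log s|$ and follows from $h(s) := s - s^{-1} - 2\log s$ satisfying $h'(s) = (s-1)^2/s^2 \ge 0$ together with $h(1) = 0$; and (ii) $\psi(v) := v\log v - v + 1 \ge (v-1)^2/(2C)$, by Taylor's theorem at $v=1$ using $\psi(1) = \psi'(1) = 0$ and $\psi''(\xi) = 1/\xi \ge 1/C$ on $(0,C]$. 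Applying (i) pointwise under $p(z\g x)$ gives $\Var_{q_\phi}(f_\phi) \le \chi^2(q_\phi \;||\; p(\cdot\g x))$, and integrating (ii) against $p(z\g x)$ gives $\chi^2 \le 2C \cdot \textrm{\textsc{kl}}$, so in total $\Var_{q_\phi}(f_\phi) \le 2C \cdot \textrm{\textsc{kl}} \le 4C \cdot \textrm{\textsc{kl}}(q_\phi(z)\;||\;p(z\g x))$.

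Finally, by \Cref{lemma: optimal control variate decomposition} we have $\mathbb{E}_{q_\phi}[a^{\text{VarGrad}}] = -\textrm{\textsc{elbo}}(\phi) = \textrm{\textsc{kl}}(q_\phi(z)\;||\;p(z\g x)) - \log p(x)$. Substituting the variance bound from the previous paragraph into the estimate from Step~1 and dividing by $|\mathbb{E}_{q_\phi}[a^{\text{VarGrad}}]|$ produces $\frac{2\sqrt{C\cdot \mathrm{Kurt}[\partial_{\phi_i}\log q_\phi] \cdot \textrm{\textsc{kl}}}}{|\textrm{\textsc{kl}} - \log p(x)|}$, which matches the stated right-hand side via the algebraic identity $(\textrm{\textsc{kl}} - \log p(x))/\sqrt{\textrm{\textsc{kl}}} = \sqrt{\textrm{\textsc{kl}}} - \log p(x)/\sqrt{\textrm{\textsc{kl}}}$. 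The main obstacle is the pointwise inequality $v(\log v)^2 \le (v-1)^2$: it is not a standard textbook identity and its verification benefits from the $s = \sqrt v$ reduction to a monotonicity calculation; the Taylor bound on $\psi$, the Cauchy--Schwarz step, and the final algebraic rearrangement are then essentially mechanical.
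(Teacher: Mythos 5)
Your proposal is correct, and it reproduces the paper's outer structure (Cauchy--Schwarz to pull out $\sqrt{\Var_{q_\phi}(f_\phi)\,\mathrm{Kurt}[\partial_{\phi_i}\log q_\phi]}$, then \Cref{lemma: optimal control variate decomposition} to write $\mathbb{E}_{q_\phi}[a^{\text{VarGrad}}] = \textrm{\textsc{kl}} - \log p(x)$ in the denominator), but the central step---converting the second moment of the log-ratio into a multiple of the \gls{KL} divergence---is done by a genuinely different route. The paper bounds $\mathbb{E}_{q_\phi}[\log^2 (p(z\g x)/q_\phi(z))]$ via the elementary estimate $\tfrac{1}{2}x^2 \le e^x - 1 - x$ and then invokes two external results: Lemma 8.3 of Ghosal et al.\ (which brings in the constant $C$ through the Hellinger distance) and the Hellinger--\textsc{kl} inequality of Reiss. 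You instead change measure to the posterior, $\mathbb{E}_{q_\phi}[(\log\rho)^2] = \mathbb{E}_{p(\cdot\g x)}[\rho(\log\rho)^2]$ with $\rho = q_\phi/p(\cdot\g x)$, and use two self-contained pointwise inequalities: $v(\log v)^2 \le (v-1)^2$ for all $v>0$ (your $s=\sqrt v$ monotonicity argument is valid and in fact needs no restriction to $(0,C]$), giving $\mathbb{E}_{q_\phi}[(\log\rho)^2] \le \chi^2(q_\phi \,\|\, p(\cdot\g x))$, and the Taylor bound $v\log v - v + 1 \ge (v-1)^2/(2C)$ on $(0,C]$, giving $\chi^2 \le 2C\cdot\textrm{\textsc{kl}}$, where the bounded-ratio assumption enters exactly as in the paper. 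Your route has the advantage of being fully elementary (no citation of the Ghosal--Ghosh--van der Vaart or Reiss lemmas) and it actually yields the slightly sharper constant $\sqrt{2C}$ in place of the paper's $2\sqrt{C}$, which you then deliberately relax to $4C\cdot\textrm{\textsc{kl}}$ to match the stated bound; the paper's Hellinger-based route is shorter on the page because the heavy lifting is outsourced to standard results. Both arguments are valid and give the claimed inequality.
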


\begin{proof}
See \Cref{proof: Cov neglectable for large D}.
\end{proof}

\begin{remark}
The variational approximation $q_\phi(z)$ typically underestimates the spread of the posterior $p(z \g x)$ \citep{blei2017variational}, and so the assumption in \Cref{eq:underspread} is typically satisfied in practice after a few iterations of the optimisation algorithm. The kurtosis $\mathrm{Kurt}[\partial_{\phi_i} \log q_\phi]$ quantifies the weight of the tails of the variational approximation in terms of the score function. In \Cref{app:exponential families} we analyse the kurtosis of exponential family distributions and show that it is uniformly bounded for Gaussian variational families.
\end{remark}

\begin{remark}\label{rem:largeKLremark}
The upper bound in \Cref{eq:cor bound} allows us to identify two regimes. When $\textrm{\textsc{kl}}(q_\phi(z) \;||\; p(z \g x))$ is large, the bound asserts that the relative error satisfies
\begin{align}\label{eq:regime1}
\left\vert \frac{\delta^{\textnormal{CV}}_i}{\mathbb{E}_{q_\phi}[a^{\text{VarGrad}}]}\right\vert \lessapprox \mathcal{O}\left(\textrm{\textsc{kl}}(q_\phi(z) \;||\; p(z \g x))^{-1/2}\right),
\end{align}
as the second term in the denominator of \Cref{eq:cor bound} becomes negligible.  This can happen in the early stages of the optimisation process, in which case we can conclude that $\delta^{\text{CV}}$ is expected to be small. Since the \gls{KL} divergence increases with the dimensionality of the latent variable $z$ (see \Cref{app:KL}), \Cref{eq:cor bound} also implies that the ratio becomes smaller as the number of latent variables grows. Moreover, if the \textit{minimum} \gls{KL} divergence between the variational family and the true posterior is large (i.e., if the best candidate in the variational family is still far away from the target), the correction term $\delta_i^{\text{CV}}$ can be negligible during the whole optimisation procedure, which is often the case in practice.

In the regime where $\textrm{\textsc{kl}}(q_\phi(z) \;||\; p(z \g x))$ approaches zero (i.e., towards the end of the optimisation process if the variational family is well specified and includes the posterior), then \Cref{eq:cor bound} implies that
\begin{align}
\label{eq:regime2}
\left\vert \frac{\delta^{\text{CV}}_i}{\mathbb{E}_{q_\phi}[a^{\text{VarGrad}}]}\right\vert \lessapprox  \mathcal{O}\left(\textrm{\textsc{kl}}(q_\phi(z) \;||\; p(z \g x))^{1/2}\right).
\end{align}
In this regime, the error w.r.t. the optimal control variate coefficient decreases with the $\gls{KL}$ divergence. The estimates in \Cref{eq:regime1} and \Cref{eq:regime2} combined suggest that the relative error remains bounded throughout the optimisation. We verify this proposition experimentally in \Cref{sec:experiments}.
\end{remark}

\subsection{Variance of the Estimator}
\label{sec:variance}

In this section we provide a result that guarantees that the variance of $\widehat{g}_{\text{VarGrad}}$ is smaller than the variance of $\widehat{g}_{\text{Reinforce}}$ when the number of Monte Carlo samples is large enough.

\begin{proposition}
\label{prop: variance comparison log-var vs. reinforce}
Consider the two gradient estimators $\widehat{g}_{\text{Reinforce}}(\phi)$ and $\widehat{g}_{\text{VarGrad}}(\phi)$, each with $S$ Monte Carlo samples, as defined in \Cref{eq:score_function} and \Cref{eq:vargrad}, respectively. If 
\begin{equation}
\label{eq:comparison bound}
-\frac{\delta^{\text{CV}}_i}{\mathbb{E}_{q_\phi}[a^{\text{VarGrad}}]} =  \frac{\delta^{\text{CV}}_i}{\mathrm{ELBO}(\phi)} < \frac{1}{2}
\end{equation}
then there exists $S_0 \in \mathbb{N}$ such that 
\begin{equation}
\label{eq:better variance}
\Var \left(\widehat{g}_{\text{VarGrad}, i}(\phi)\right) \leq \Var \left(\widehat{g}_{ \text{Reinforce}, i}(\phi)\right), \quad \quad \textnormal{for all} \quad S \ge S_0.
\end{equation}
\end{proposition}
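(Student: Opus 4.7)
The plan is to compare the asymptotic variances of $\widehat{g}_{\text{VarGrad},i}(\phi)$ and $\widehat{g}_{\text{Reinforce},i}(\phi)$ in the limit $S\to\infty$, show that the hypothesis in \Cref{eq:comparison bound} is exactly what makes the VarGrad asymptotic variance strictly smaller, and then invoke the corresponding strict inequality of limits to conclude that \Cref{eq:better variance} holds for all sufficiently large $S$.

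The first step is to recognise $\widehat{g}_{\text{VarGrad},i}(\phi)$ as a rescaled unbiased sample covariance of the paired random variables $F_s := f_\phi(z^{(s)})$ and $G_s := \partial_{\phi_i}\log q_\phi(z^{(s)})$. A direct manipulation of \Cref{eq:vargrad} yields
\begin{equation*}
    \widehat{g}_{\text{VarGrad},i}(\phi) \;=\; \frac{S}{S-1}\!\left(\frac{1}{S}\sum_{s=1}^S F_s G_s - \bar{F}\,\bar{G}\right),
\end{equation*}
whereas $\widehat{g}_{\text{Reinforce},i}(\phi) = \tfrac{1}{S}\sum_s F_s G_s$ is a plain sample mean whose rescaled variance is \emph{exactly} $S\cdot\Var(\widehat{g}_{\text{Reinforce},i})=\Var_{q_\phi}(f_\phi\,\partial_{\phi_i}\log q_\phi)$. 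Applying the delta method to linearise $\bar{F}\bar{G}$ around $(\E_{q_\phi}[F], \E_{q_\phi}[G]) = (-\mathrm{ELBO}(\phi),\,0)$ (or equivalently Slutsky's theorem on the centred sum) then gives
\begin{equation*}
    S\cdot\Var\bigl(\widehat{g}_{\text{VarGrad},i}\bigr) \;\xrightarrow{S\to\infty}\; \Var_{q_\phi}\!\bigl((f_\phi + \mathrm{ELBO}(\phi))\,\partial_{\phi_i}\log q_\phi\bigr).
\end{equation*}

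Writing $g_i := \partial_{\phi_i}\log q_\phi$ and using $\E_{q_\phi}[g_i]=0$, the identity $\E_{q_\phi}[(f_\phi+\mathrm{ELBO})g_i]=\E_{q_\phi}[f_\phi g_i]$ makes the squared-mean terms cancel when subtracting the two asymptotic variances, leaving
\begin{equation*}
    \Delta \;:=\; \Var_{q_\phi}\!\bigl((f_\phi+\mathrm{ELBO})g_i\bigr) - \Var_{q_\phi}(f_\phi g_i) \;=\; 2\,\mathrm{ELBO}(\phi)\,\E_{q_\phi}[f_\phi g_i^2] + \mathrm{ELBO}(\phi)^2\,\E_{q_\phi}[g_i^2].
\end{equation*}
Substituting $\E_{q_\phi}[f_\phi g_i^2] = \Cov_{q_\phi}(f_\phi,g_i^2) + \E_{q_\phi}[f_\phi]\E_{q_\phi}[g_i^2] = (\delta^{\text{CV}}_i - \mathrm{ELBO}(\phi))\,\E_{q_\phi}[g_i^2]$, where I use \Cref{eqn: delta_CV} together with $\Var_{q_\phi}(g_i)=\E_{q_\phi}[g_i^2]$ and $\E_{q_\phi}[f_\phi]=-\mathrm{ELBO}(\phi)$, a short algebraic collapse gives
\begin{equation*}
    \Delta \;=\; \E_{q_\phi}[g_i^2]\cdot\mathrm{ELBO}(\phi)\cdot\bigl(2\delta^{\text{CV}}_i - \mathrm{ELBO}(\phi)\bigr).
\end{equation*}

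Finally, a simple case split on the sign of $\mathrm{ELBO}(\phi)$ shows that, in either case, the hypothesis $\delta^{\text{CV}}_i/\mathrm{ELBO}(\phi) < 1/2$ is equivalent to $\mathrm{ELBO}(\phi)\cdot(2\delta^{\text{CV}}_i - \mathrm{ELBO}(\phi)) < 0$, hence $\Delta < 0$. Combined with the fact that $S\cdot\Var(\widehat{g}_{\text{VarGrad},i})$ and $S\cdot\Var(\widehat{g}_{\text{Reinforce},i})$ agree with their asymptotic values up to $\mathcal{O}(1/S)$ corrections (the standard second-order expansion of the variance of a sample covariance, assuming finite fourth moments of $F_sG_s$), the strict inequality of the limits yields some $S_0\in\mathbb{N}$ such that \Cref{eq:better variance} holds for all $S\ge S_0$. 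The main obstacle I anticipate is controlling the cross term $\bar{F}\bar{G}$ in $\widehat{g}_{\text{VarGrad},i}$: an exact finite-$S$ closed form for $\Var(\widehat{g}_{\text{VarGrad},i})$ involves fourth-order joint moments of $(F,G)$ and is combinatorially unpleasant, but the delta-method route sidesteps this bookkeeping and reduces the problem to the clean algebraic identity for $\Delta$, after which the conclusion is just sign-tracking under the hypothesis.
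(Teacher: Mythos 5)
Your proposal is correct and follows essentially the same route as the paper: both arguments reduce the comparison to the leading $\mathcal{O}(1/S)$ terms of the two variances (yours via the asymptotic variance of the unbiased sample covariance, the paper via its exact finite-$S$ variance formula expanded in $1/S$) and arrive at the same factored sign condition $\mathrm{ELBO}(\phi)\,(2\delta^{\text{CV}}_i-\mathrm{ELBO}(\phi))<0$, which is equivalent to the hypothesis. Your closing remark correctly identifies that the delta method alone does not control moments and that the needed $\mathcal{O}(1/S)$ agreement comes from the standard finite-sample variance expansion of the sample covariance under finite fourth-order joint moments, which is exactly the formula the paper starts from.
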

\begin{proof}
See \Cref{proof: variance comparison log-var vs. reinforce}.
\end{proof}

If the correction $\delta^{\text{CV}}$ is negligible in the sense of \Cref{prop:delta small}, then the assumption in \Cref{eq:comparison bound} is satisfied and  \Cref{prop: variance comparison log-var vs. reinforce} guarantees that VarGrad has lower variance than Reinforce when $S$ is large enough.
We arrive at the following corollary, which also considers the dimensionality of the latent variables. The main assumption -that the \gls{KL}-divergence increases with the dimension of the latent space- is supported by the result in \Cref{app:KL}.

\begin{corollary}
\label{corollary: variance comparison log-var vs. reinforce}
Let $S$ be the number of samples and $D$ the dimension of the latent variable $z$. Furthermore, let the assumptions of \Cref{prop: Cov neglectable for large D} be satisfied and assume that $\textrm{\textsc{kl}}(q_\phi(z) \;||\; p(z \g x))$ is strictly  increasing in $D$. Then, there exist $S_0, D_0 \in \mathbb{N}$ such that
\begin{equation}
\Var \left(\widehat{g}_{\text{VarGrad}, i}(\phi)\right) \leq \Var \left(\widehat{g}_{\text{Reinforce}, i}(\phi)\right), \quad \quad \textnormal{for all } S \ge S_0 \textnormal{ and } D \ge D_0.
\end{equation}
\end{corollary}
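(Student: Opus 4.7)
The plan is to chain together the two previously established propositions. First, I would invoke Proposition \ref{prop: Cov neglectable for large D} (i.e., \Cref{prop:delta small}) to control the relative size of the control-variate correction. Under the stated assumptions (light tails of $q_\phi$, bounded kurtosis of the score), the proposition gives, for every component $i$,
\begin{equation*}
\left\vert \frac{\delta^{\textnormal{CV}}_i}{\mathbb{E}_{q_\phi}[a^{\text{VarGrad}}]}\right\vert \le \frac{2\sqrt{C\,\mathrm{Kurt}[\partial_{\phi_i} \log q_\phi]}}{\left\vert \sqrt{\textrm{\textsc{kl}}(q_\phi \;||\; p(z\g x))} - \frac{\log p(x)}{\sqrt{\textrm{\textsc{kl}}(q_\phi \;||\; p(z\g x))}}\right\vert}.
\end{equation*}

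Next, I would use the hypothesis that $\textrm{\textsc{kl}}(q_\phi(z) \;||\; p(z \g x))$ is strictly increasing in $D$, combined with the growth estimate from \Cref{app:KL} (which shows that the \gls{KL} grows without bound with the latent dimension), to conclude that the right-hand side above tends to $0$ as $D \to \infty$. Hence there exists $D_0 \in \mathbb{N}$ such that for every $D \ge D_0$,
\begin{equation*}
\left\vert \frac{\delta^{\textnormal{CV}}_i}{\mathbb{E}_{q_\phi}[a^{\text{VarGrad}}]}\right\vert < \frac{1}{2}.
\end{equation*}
Since $\mathbb{E}_{q_\phi}[a^{\text{VarGrad}}] = -\mathrm{ELBO}(\phi)$ by \Cref{lemma: optimal control variate decomposition}, this is exactly the condition $\delta^{\text{CV}}_i / \mathrm{ELBO}(\phi) < 1/2$ appearing in \Cref{eq:comparison bound}.

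Finally, with the hypothesis of \Cref{prop: variance comparison log-var vs. reinforce} satisfied, I would invoke that proposition to obtain an $S_0 \in \mathbb{N}$ (possibly depending on $D$ and $\phi$) such that $\Var(\widehat{g}_{\text{VarGrad},i}(\phi)) \leq \Var(\widehat{g}_{\text{Reinforce},i}(\phi))$ for all $S \ge S_0$. Taking the maximum of $S_0$ across the finitely many coordinates $i$ (and, if needed, a supremum over $D \ge D_0$ if one wants $S_0$ independent of $D$) yields the claim. The main obstacle is the interaction between the two thresholds: one needs to be careful that $S_0$ from \Cref{prop: variance comparison log-var vs. reinforce} does not blow up with $D$, but since the corollary only asserts the existence of \emph{some} pair $(S_0, D_0)$, this can be handled by choosing $D_0$ first and then $S_0$, rather than insisting on a joint uniform bound.
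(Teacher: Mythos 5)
Your proposal is correct and follows essentially the same route as the paper: invoke Proposition~\ref{prop: Cov neglectable for large D} together with the growth of the \textsc{kl} divergence in $D$ to drive the ratio $\bigl\vert \delta^{\text{CV}}_i / \mathbb{E}_{q_\phi}[a^{\text{VarGrad}}]\bigr\vert$ below $1/2$, then apply Proposition~\ref{prop: variance comparison log-var vs. reinforce}. Your added remarks on choosing $D_0$ before $S_0$ and on the harmless non-uniformity of the thresholds are a correct (and slightly more careful) elaboration of what the paper leaves implicit.
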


\begin{proof}
See \Cref{proof: corollary variance comparison log-var vs. reinforce}.
\end{proof}

We provide further intuition on the condition in \Cref{eq:comparison bound} with the analysis in \Cref{appendix: Covariance terms for diagonal Gaussians}. 
\section{Related Work}
\label{sec:related}

In the last few years, many gradient estimators of the \gls{ELBO} have been proposed; see \citet{mohamed2019monte} for a comprehensive review. Among those, the score function estimators \citep{williams1992simple,carbonetto2009stochastic,paisley2012variational,ranganath2014black} and the reparameterisation estimators \citep{kingma2014auto,rezende2014stochastic,titsias2014doubly}, as well as combinations of both \citep{ruiz2016generalized,Naesseth2017}, are arguably the most widely used.
\textsc{nvil} \citep{mnih2014neural} and MuProp \citep{gu2016muprop} are unbiased gradient estimators for training stochastic neural networks.

Other gradient estimators are specific for discrete-valued latent variables. The concrete relaxation \citep{Maddison2017TheCD,jang2017} described a way to form a biased estimator of the gradient, which \textsc{rebar} \citep{tucker2017rebar} and \textsc{relax} \citep{grathwohl2018backpropagation} use as a control variate to obtain an unbiased estimator. Other recent estimators have been proposed by \citet{lee2018reparameterization,peters2018probabilistic,shayer2018learning,cong2019go,yin2018arm,yin2019arsm}, and \citet{dong2020disarm}.
In \Cref{sec:experiments}, we compare VarGrad with some of these estimators, showing that it exhibits a favourable performance versus computational complexity trade-off.

The VarGrad estimator was first introduced by \citet{salimans2014onusing} and \citet{kool2019buy}. It also relates to \textsc{vimco} \citep{mnih2016variational} in that it is a leave-one-out estimator. In this paper, we have described an alternative derivation of VarGrad, based on the log-variance loss.

The log-variance loss from \Cref{subsec:log_variance_loss} defines an alternative divergence between the approximate and the exact posterior distributions. In the context of optimal control of diffusion 
processes and related forward-backward stochastic differential equations, it arises naturally to quantify the discrepancy between measures on path space \citep{nusken2020solving}. 
Other forms of alternative divergences have also been explored in previous work; for example the $\chi^2$-divergence \citep{dieng2017variational}, the R\'{e}nyi divergence \citep{li2016renyi}, the Langevin-Stein \citep{ranganath2016operator}, the  $\alpha$-divergence \citep{hernandezlobato2016black}, other $f$-divergences \citep{wang2018variational}, a contrastive divergence \citep{ruiz2019contrastive}, and also the inclusive \gls{KL} \citep{naesseth2020markovian}, see also \Cref{appendix: other divergences}.

Finally, from an implementation perspective, \Cref{alg:vargrad} contains a \verb+stop_gradient+ operator that resembles the method of \citet{roeder2017sticking}. In \citet{roeder2017sticking}, this operator is used on the variational parameters to eliminate the entropy term in the reparameterisation gradient to reduce its variance; this has also been extended to importance weighted variational objectives \citep{tucker2018doubly}. In contrast, VarGrad applies the \verb+stop_gradient+ operator on the samples and is based on the score function method.

\section{Experiments}
\label{sec:experiments}

\begin{figure}[bt]
\begin{subfigure}[t]{.33\textwidth}
  \centering
  \includegraphics[width=.9\linewidth]{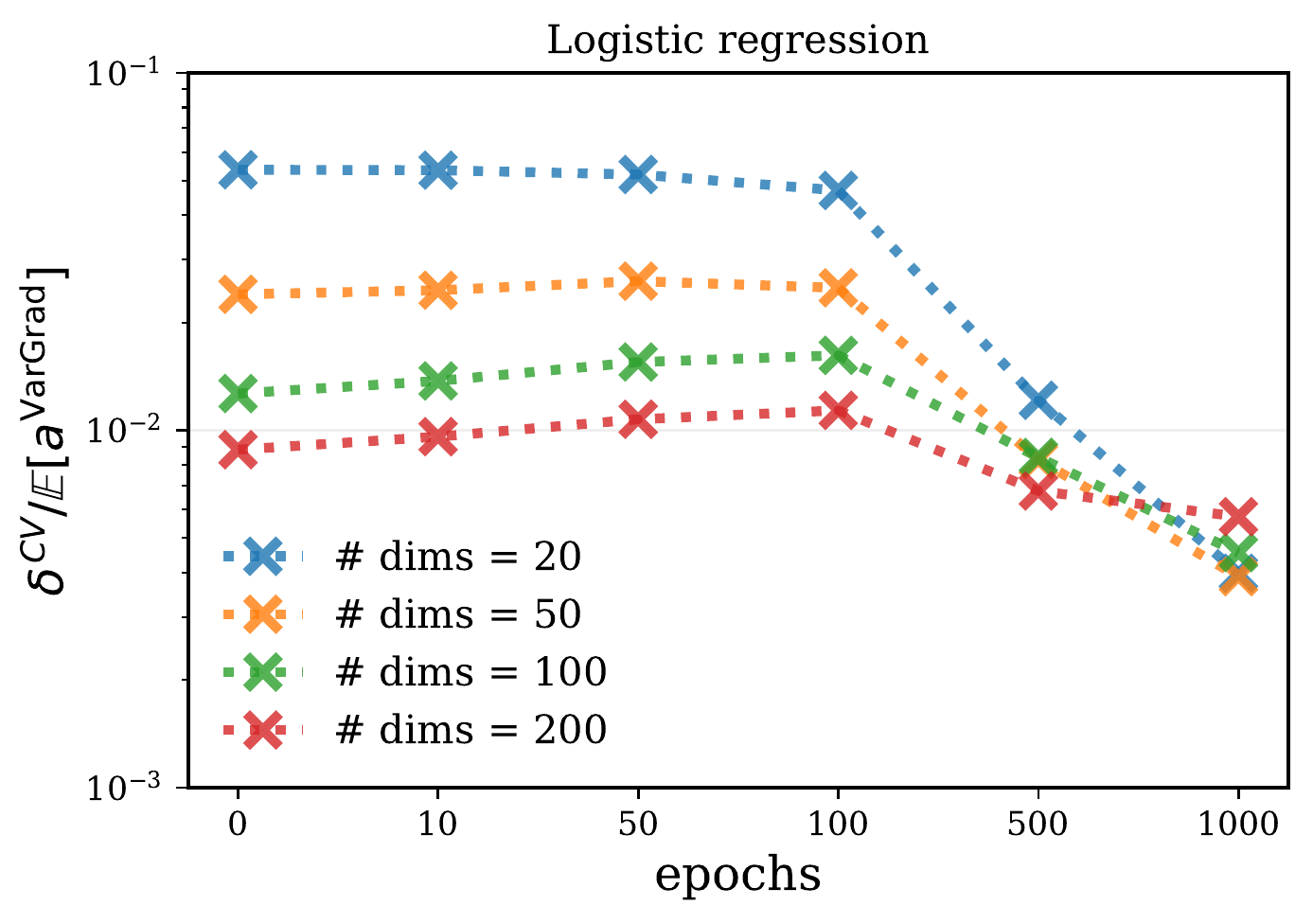}
    \caption{}
\end{subfigure}
\begin{subfigure}[t]{.33\textwidth}
  \centering
  \includegraphics[width=.9\linewidth]{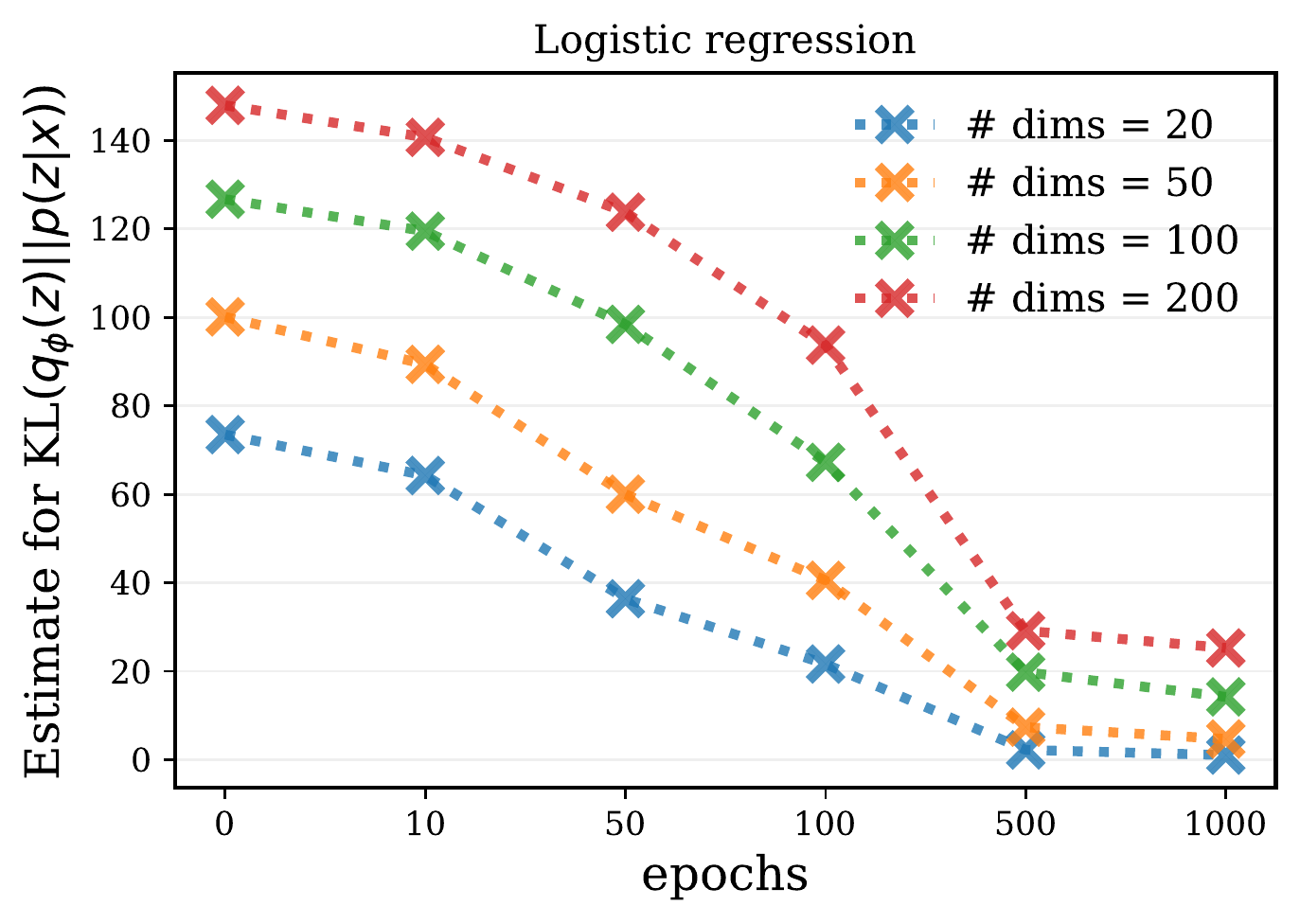}
    \caption{}
\end{subfigure}
\begin{subfigure}[t]{.33\textwidth}
  \centering
  \includegraphics[width=.9\linewidth]{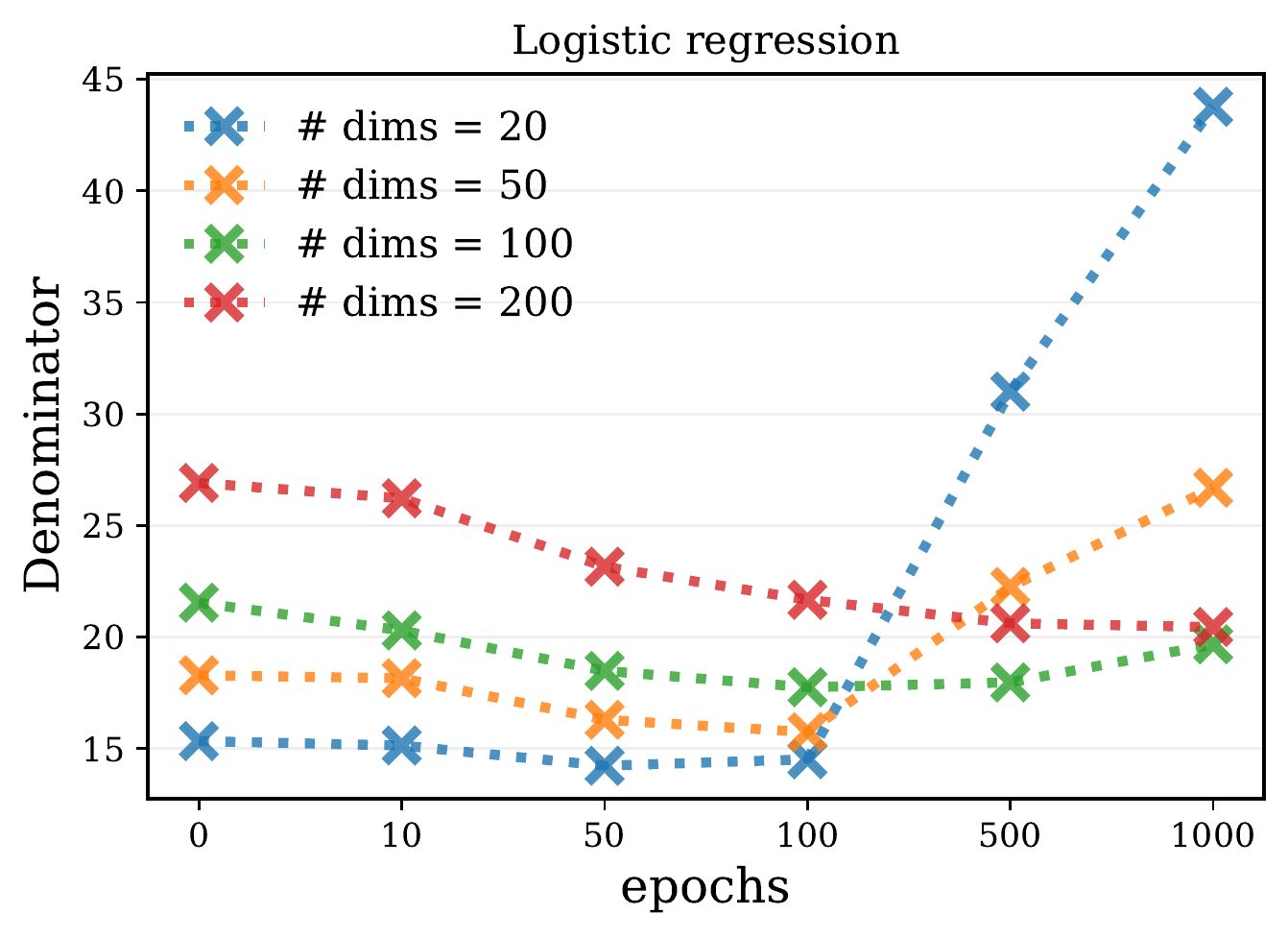}
    \caption{}
\end{subfigure}
    \caption{Illustration of \Cref{prop:delta small} and \Cref{rem:largeKLremark} on the logistic regression model. In (a), we show that the ratio $\left\vert {\delta^{\textnormal{CV}}_i}/{\mathbb{E}_{q_\phi}[a^{\text{VarGrad}}]}\right\vert$ is small and uniformly bounded over epochs, illustrating that the VarGrad estimator stays close to the optimal control variate coefficients during the whole optimisation procedure. Additionally, this ratio decreases with increasing dimensionality of the latent variables. In (b), we display an estimate of the \gls{KL} divergence across epochs and demonstrate the beneficial effect of higher dimensions, since the bound of \Cref{eq:cor bound} is expected to scale like $\mathcal{O}(\mathrm{\textsc{kl}}^{-1/2})$ in the early phase. In (c), we plot an estimate of the denominator of the bound (\Cref{eq:cor bound}), which increases or stays constant over epochs, demonstrating that the ratio in \Cref{eq:cor bound} stays stable (and small) over epochs.}
\label{fig:maindeltacv_log_reg}
\end{figure}

In order to verify the properties of VarGrad empirically, we test it on two popular models: a Bayesian logistic regression model on a synthetic dataset and a \gls{DVAE} \citep{salakhutdinov2008quantitative, kingma2014auto} on a fixed binarisation of Omniglot \citep{lake2015human}.
All details of the experiments can be found in \Cref{app:details_experiments}.\footnote{%
Code in JAX \citep{jax2018github,haiku2020github} is available at \url{https://github.com/aboustati/vargrad}.}

\parhead{Closeness to the optimal control variate.}
In \Cref{sec:analytical} we analytically showed that VarGrad is close to the optimal control variate, and in particular that the ratio $\left\vert {\delta^{\textnormal{CV}}_i}/{\mathbb{E}_{q_\phi}[a^{\text{VarGrad}}]}\right\vert$ can be small over the whole optimisation procedure. This behaviour is expected to be even more pronounced with growing dimensionality of the latent space. In \Cref{fig:maindeltacv_log_reg}, we confirm this result by showing the ratio $\left\vert {\delta^{\textnormal{CV}}_i}/{\mathbb{E}_{q_\phi}[a^{\text{VarGrad}}]}\right\vert$ for the logistic regression model. 
We also show the \gls{KL} divergence along the iterations and the denominator of the bound in \Cref{eq:cor bound}; see \Cref{fig:maindeltacv_log_reg} for the details.

In \Cref{fig:vae_delta_per_dim}, we provide further evidence that this ratio is also small when fitting \glspl{DVAE}. Indeed, we observe that the ratio $\delta_i^{\textrm{CV}}/\bE[a^{\textnormal{VarGrad}}]$ is typically very small and is distributed around zero during the whole optimisation procedure.

\begin{figure}[t]
    \centering
    \begin{subfigure}[t]{0.48\textwidth}
      \centering
      \includegraphics[width=\textwidth]{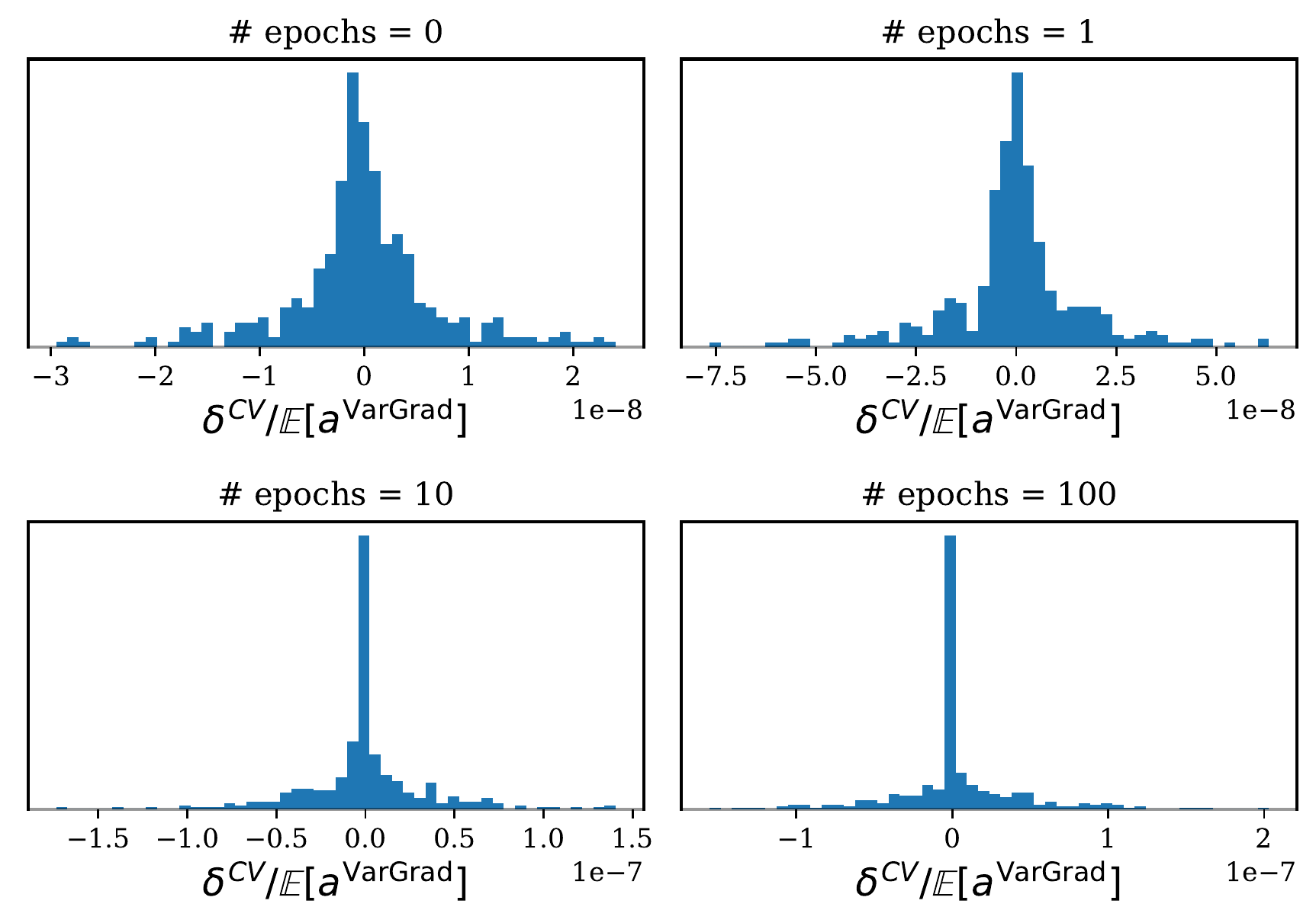}  
      \caption{Two-layer linear \acrshort{DVAE}.\label{fig:vae_delta_per_dim_linear}}
    \end{subfigure}
    \begin{subfigure}[t]{0.48\textwidth}
      \centering
      \includegraphics[width=\linewidth]{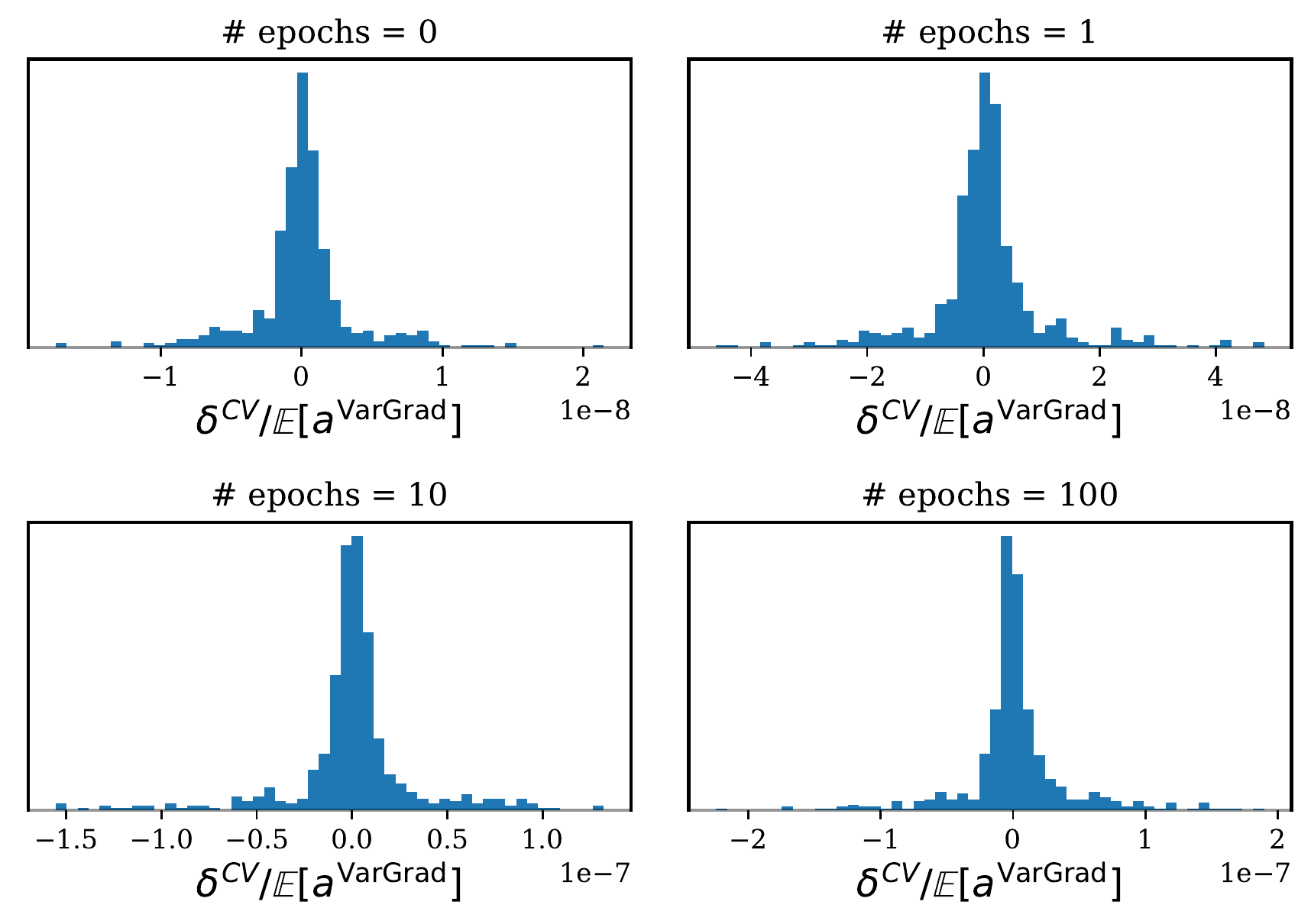}  
      \caption{Non-linear \acrshort{DVAE}.\label{fig:vae_delta_per_dim_deep}}
    \end{subfigure}
    \caption{The distribution of $\frac{\delta_i^{\text{CV}}}{\mathbb{E}[a^{\text{VarGrad}}]}$ associated with the biases of two \gls{DVAE} models with $200$ latent dimensions trained on Omniglot using VarGrad. The estimates are obtained with $2{,}000$ Monte Carlo samples. The ratio $\frac{\delta_i^{\text{CV}}}{\mathbb{E}[a^{\text{VarGrad}}]}$ is consistently small throughout the optimisation procedure.%
    \label{fig:vae_delta_per_dim}}
\end{figure}

\parhead{Variance reduction and computational cost.}
In \Cref{fig:logistic_regression_variance} we show the variance of different gradient estimators throughout the optimisation in the logistic regression setting. We realise a significant improvement of VarGrad compared to the standard Reinforce estimator (\Cref{eq:score_function}). In fact, we observe a small difference between the variance of VarGrad and the variance of an \emph{oracle estimator} based on Reinforce with access to the optimal control variate coefficient $a^*$. \Cref{fig:logistic_regression_variance} also shows the variance of the \emph{sampled estimator}, which is based on Reinforce with an estimate of the optimal control variate; this confirms the difficulty of estimating it in practice. (A similar trend can be observed for the \gls{DVAE} in the results in \Cref{app:details_experiments}, where VarGrad is compared to a wider list of estimators from the \gls{DVAE} literature.)
All methods use $S=4$ Monte Carlo samples, and the control variate coefficient is estimated with either $2$ extra samples (\emph{sampled estimator}) or $1{,}000$ samples (\emph{oracle estimator}).

\begin{figure}[t]
    \centering
    \includegraphics[width=0.41\textwidth]{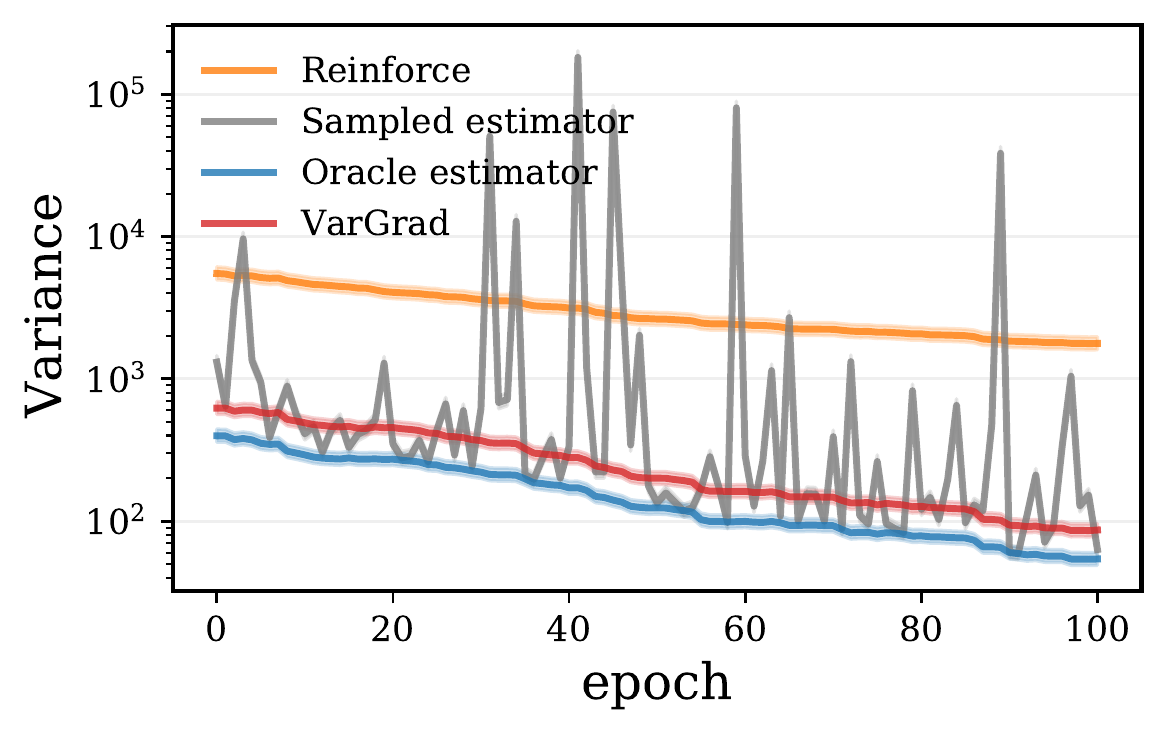}
    \caption{Estimates of the variance of the gradient component w.r.t.\ the posterior mean of one of the weights for the logistic regression model. The variance of VarGrad is close to the \emph{oracle estimator} based on Reinforce with access to the optimal control variate coefficient $a^*$. Moreover, the \emph{sampled estimator} (based on Reinforce with an estimate of $a^*$) shows the difficulty of estimating the optimal control variate coefficient in practice.}
    \label{fig:logistic_regression_variance}
\end{figure}

Finally, \Cref{fig:omniglot_main} compares VarGad with other estimators by training a \gls{DVAE} on Omniglot. The figure shows the negative \gls{ELBO} as a function of the epoch number (left plot) and against the wall-clock time (right plot). The negative \gls{ELBO} is computed on the standard test split and the optimisation uses Adam \citep{kingma2015adam} with learning rate of $0.001$. VarGrad achieves similar performance to state-of-the-art estimators, such as \textsc{rebar} \citep{tucker2017rebar}, \textsc{relax} \citep{grathwohl2018backpropagation}, and \textsc{arm} \citep{yin2018arm}, while being simpler to implement (see \Cref{alg:vargrad}) and without any tunable hyperparameters.

\begin{figure}[t]
\centering
\includegraphics[width=\linewidth]{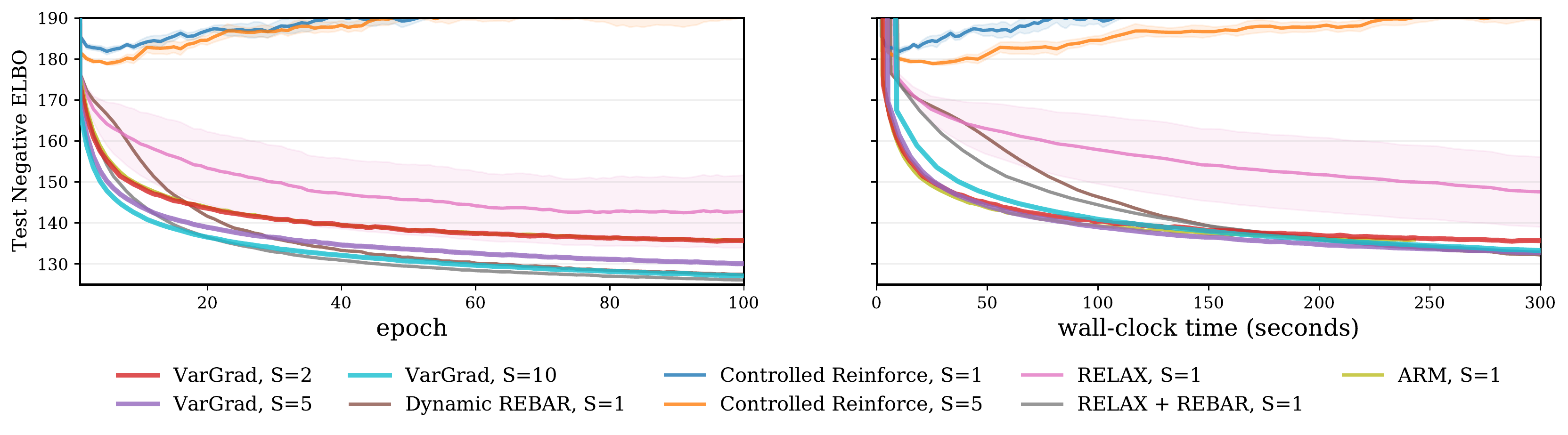}
\caption{Optimisation trace versus epoch (left) and wall-clock time (right) for a two-layer linear \gls{DVAE} on a fixed binarisation of Omniglot. The plot compares VarGrad to Reinforce with score function control variates \citep{ranganath2014black}, dynamic \textsc{rebar} \citep{tucker2017rebar}, \textsc{relax}, \textsc{relax} + \textsc{rebar} \citep{grathwohl2018backpropagation} and ARM \citep{yin2018arm}. The number of samples used to compute each gradient estimator is given in the figure legend. VarGrad demonstrates favourable scalability and performance when compared to the other estimators.}
\label{fig:omniglot_main}
\end{figure} 
\section{Conclusions}
\label{sec:conclusions}

We have analysed the VarGrad estimator, an estimator of the gradient of the \gls{KL} that is based on Reinforce with leave-one-out control variates, which was first introduced by \citet{salimans2014onusing} and \citet{kool2019buy}. We have established the connection between VarGrad and a novel divergence, which we call the log-variance loss. We have showed theoretically that, under certain conditions, the VarGrad control variate coefficients are close to the optimal ones. Moreover, we have established the conditions that guarantee that VarGrad exhibits lower variance than Reinforce. We leave it for future work to explore the direct optimisation of the log-variance loss for alternative choices of the reference distribution $r(z)$.

\section*{Acknowledgments}

\"{O}.~D.~A. and A.~B.\ are funded by the Lloyds Register Foundation programme on Data Centric Engineering through the London Air Quality project at The Alan Turing Institute for Data Science and AI. This work was supported under EPSRC grant EP/N510129/1 as well as by Deutsche Forschungsgemeinschaft (DFG) through the
grant CRC 1114 ‘Scaling Cascades in Complex Systems’ (projects A02 and A05, project number 235221301).

\section*{Broader Impact}
Variational inference algorithms are approximate inference methods used in many practical applications, including computational neuroscience, natural language processing, and computer vision; see \citet{blei2017variational}. The performance of variational inference often depends on the variance of the gradient estimator of its objective. High-variance estimators can make the resulting algorithm unstable and unreliable to use in real-world deployment scenarios; hence, deeper theoretical and empirical understanding of different gradient estimators is crucial for safe applicability of these methods in real-world settings.

In our work, we provide an analysis of the VarGrad estimator. We show the connection between this estimator and the ``log-variance loss'' and demonstrate its relationship to the optimal score function control variate for Reinforce, providing a theoretical analysis. We support our theoretical analysis with empirical results. We believe our work contributes in the further understanding of variational inference methods and contributes to improving their safety and applicability.

While our theoretical results are suggestive, we warn that the bounds depend on the nature of the model at hand. Therefore, the use of the results here while ignoring the model-specific aspects of our assumptions may lead to some risks in application.

\bibliography{manuscript}

\begin{thebibliography}{48}
\providecommand{\natexlab}[1]{#1}
\providecommand{\url}[1]{\texttt{#1}}
\expandafter\ifx\csname urlstyle\endcsname\relax
  \providecommand{\doi}[1]{doi: #1}\else
  \providecommand{\doi}{doi: \begingroup \urlstyle{rm}\Url}\fi

\bibitem[Blei et~al.(2017)Blei, Kucukelbir, and McAuliffe]{blei2017variational}
D.~M. Blei, A.~Kucukelbir, and J.~D. McAuliffe.
\newblock Variational inference: {A} review for statisticians.
\newblock \emph{Journal of the American Statistical Association}, 112\penalty0
  (518):\penalty0 859--877, 2017.

\bibitem[Bradbury et~al.(2018)Bradbury, Frostig, Hawkins, Johnson, Leary,
  Maclaurin, and Wanderman-Milne]{jax2018github}
J.~Bradbury, R.~Frostig, P.~Hawkins, M.~J. Johnson, C.~Leary, D.~Maclaurin, and
  S.~Wanderman-Milne.
\newblock {JAX}: composable transformations of {P}ython+{N}um{P}y programs,
  2018.
\newblock URL \url{http://github.com/google/jax}.

\bibitem[Carbonetto et~al.(2009)Carbonetto, King, and
  Hamze]{carbonetto2009stochastic}
P.~Carbonetto, M.~King, and F.~Hamze.
\newblock A stochastic approximation method for inference in probabilistic
  graphical models.
\newblock In \emph{Advances in Neural Information Processing Systems}, 2009.

\bibitem[Cong et~al.(2019)Cong, Zhao, Bai, and Carin]{cong2019go}
Y.~Cong, M.~Zhao, K.~Bai, and L.~Carin.
\newblock {GO} gradient for expectation-based objectives.
\newblock In \emph{International Conference on Learning Representations}, 2019.

\bibitem[Cover and Thomas(2012)]{cover2012elements}
T.~M. Cover and J.~A. Thomas.
\newblock \emph{Elements of {I}nformation {T}heory}.
\newblock John Wiley \& Sons, 2012.

\bibitem[Dieng et~al.(2017)Dieng, Tran, Ranganath, Paisley, and
  Blei]{dieng2017variational}
A.~B. Dieng, D.~Tran, R.~Ranganath, J.~Paisley, and D.~M. Blei.
\newblock Variational inference via $\chi$-upper bound minimization.
\newblock In \emph{Advances in Neural Information Processing Systems}, 2017.

\bibitem[Dong et~al.(2020)Dong, Mnih, and Tucker]{dong2020disarm}
Z.~Dong, A.~Mnih, and G.~Tucker.
\newblock {DisARM}: An antithetic gradient estimator for binary latent
  variables.
\newblock In \emph{Advances in Neural Information Processing Systems}, 2020.

\bibitem[Ghosal et~al.(2000)Ghosal, Ghosh, Van Der~Vaart,
  et~al.]{ghosal2000convergence}
S.~Ghosal, J.~K. Ghosh, A.~W. Van Der~Vaart, et~al.
\newblock Convergence rates of posterior distributions.
\newblock \emph{Annals of Statistics}, 28\penalty0 (2):\penalty0 500--531,
  2000.

\bibitem[Grathwohl et~al.(2018)Grathwohl, Choi, Wu, Roeder, and
  Duvenaud]{grathwohl2018backpropagation}
W.~Grathwohl, D.~Choi, Y.~Wu, G.~Roeder, and D.~Duvenaud.
\newblock Backpropagation through the void: Optimizing control variates for
  black-box gradient estimation.
\newblock In \emph{International Conference on Learning Representations}, 2018.

\bibitem[Gu et~al.(2016)Gu, Levine, Sutskever, and Mnih]{gu2016muprop}
S.~Gu, S.~Levine, I.~Sutskever, and A.~Mnih.
\newblock Mu{P}rop: Unbiased backpropagation for stochastic neural networks.
\newblock In \emph{International Conference on Machine Learning}, 2016.

\bibitem[Hennigan et~al.(2020)Hennigan, Cai, Norman, and
  Babuschkin]{haiku2020github}
T.~Hennigan, T.~Cai, T.~Norman, and I.~Babuschkin.
\newblock {H}aiku: {S}onnet for {JAX}, 2020.
\newblock URL \url{http://github.com/deepmind/dm-haiku}.

\bibitem[Hern\'{a}ndez-Lobato et~al.(2016)Hern\'{a}ndez-Lobato, Rowland,
  Hern\'{a}ndez-Lobato, Bui, and Turner]{hernandezlobato2016black}
Y.~Hern\'{a}ndez-Lobato, J. M. amd~Li, M.~Rowland, D.~Hern\'{a}ndez-Lobato,
  T.~Bui, and R.~E. Turner.
\newblock Black-box $\alpha$-divergence minimization.
\newblock In \emph{International Conference on Machine Learning}, 2016.

\bibitem[Jang et~al.(2017)Jang, Gu, and Poole]{jang2017}
E.~Jang, S.~Gu, and B.~Poole.
\newblock Categorical reparameterization with {G}umbel-softmax.
\newblock In \emph{International Conference on Learning Representations}, 2017.

\bibitem[Jordan et~al.(1999)Jordan, Ghahramani, Jaakkola, and
  Saul]{jordan1999introduction}
M.~I. Jordan, Z.~Ghahramani, T.~S. Jaakkola, and L.~K. Saul.
\newblock An introduction to variational methods for graphical models.
\newblock \emph{Machine Learning}, 37\penalty0 (2):\penalty0 183--233, Nov.
  1999.

\bibitem[Kingma and Ba(2015)]{kingma2015adam}
D.~P. Kingma and J.~Ba.
\newblock Adam: A method for stochastic optimization.
\newblock In \emph{International Conference on Learning Representations
  (ICLR)}, 2015.

\bibitem[Kingma and Welling(2014)]{kingma2014auto}
D.~P. Kingma and M.~Welling.
\newblock Auto-encoding variational {B}ayes.
\newblock In \emph{International Conference on Learning Representations}, 2014.

\bibitem[Kool et~al.(2019)Kool, van Hoof, and Welling]{kool2019buy}
W.~Kool, H.~van Hoof, and M.~Welling.
\newblock Buy 4 {REINFORCE} samples, get a baseline for free!
\newblock In \emph{{ICLR} Workshop on Deep Reinforcement Learning Meets
  Structured Prediction}, 2019.

\bibitem[Kool et~al.(2020)Kool, van Hoof, and Welling]{kool2020estimating}
W.~Kool, H.~van Hoof, and M.~Welling.
\newblock Estimating gradients for discrete random variables by sampling
  without replacement.
\newblock In \emph{International Conference on Learning Representations}, 2020.

\bibitem[Lake et~al.(2015)Lake, Salakhutdinov, and Tenenbaum]{lake2015human}
B.~M. Lake, R.~Salakhutdinov, and J.~B. Tenenbaum.
\newblock Human-level concept learning through probabilistic program induction.
\newblock \emph{Science}, 350\penalty0 (6266):\penalty0 1332--1338, 2015.

\bibitem[Lee et~al.(2018)Lee, Yu, and Yang]{lee2018reparameterization}
W.~Lee, H.~Yu, and H.~Yang.
\newblock Reparameterization gradient for non-differentiable models.
\newblock In \emph{Advances in Neural Information Processing Systems}, 2018.

\bibitem[Li and Turner(2016)]{li2016renyi}
Y.~Li and R.~E. Turner.
\newblock R\'{e}nyi divergence variational inference.
\newblock In \emph{Advances in Neural Information Processing Systems}, 2016.

\bibitem[Maddison et~al.(2017)Maddison, Mnih, and Teh]{Maddison2017TheCD}
C.~J. Maddison, A.~Mnih, and Y.~W. Teh.
\newblock The concrete distribution: A continuous relaxation of discrete random
  variables.
\newblock In \emph{International Conference on Learning Representations}, 2017.

\bibitem[Mnih and Gregor(2014)]{mnih2014neural}
A.~Mnih and K.~Gregor.
\newblock Neural variational inference and learning in belief networks.
\newblock In \emph{International Conference on Machine Learning}, 2014.

\bibitem[Mnih and Rezende(2016)]{mnih2016variational}
A.~Mnih and D.~J. Rezende.
\newblock Variational inference for {M}onte {C}arlo objectives.
\newblock In \emph{International Conference on Machine Learning}, 2016.

\bibitem[Mohamed et~al.(2019)Mohamed, Rosca, Figurnov, and
  Mnih]{mohamed2019monte}
S.~Mohamed, M.~Rosca, M.~Figurnov, and A.~Mnih.
\newblock Monte {C}arlo gradient estimation in machine learning.
\newblock \emph{arXiv preprint arXiv:1906.10652}, 2019.

\bibitem[Naesseth et~al.(2017)Naesseth, Ruiz, Linderman, and
  Blei]{Naesseth2017}
C.~Naesseth, F.~J.~R. Ruiz, S.~Linderman, and D.~M. Blei.
\newblock Reparameterization gradients through acceptance-rejection methods.
\newblock In \emph{Artificial Intelligence and Statistics}, 2017.

\bibitem[Naesseth et~al.(2020)Naesseth, Lindsten, and
  Blei]{naesseth2020markovian}
C.~A. Naesseth, F.~Lindsten, and D.~M. Blei.
\newblock Markovian score climbing: Variational inference with kl$(p||q)$.
\newblock \emph{arXiv preprint arXiv:2003.10374}, 2020.

\bibitem[N{\"u}sken and Richter(2020)]{nusken2020solving}
N.~N{\"u}sken and L.~Richter.
\newblock Solving high-dimensional {H}amilton-{J}acobi-{B}ellman {PDEs} using
  neural networks: perspectives from the theory of controlled diffusions and
  measures on path space.
\newblock \emph{arXiv preprint arXiv:2005.05409}, 2020.

\bibitem[Paisley et~al.(2012)Paisley, Blei, and Jordan]{paisley2012variational}
J.~W. Paisley, D.~M. Blei, and M.~I. Jordan.
\newblock Variational {B}ayesian inference with stochastic search.
\newblock In \emph{International Conference on Machine Learning}, 2012.

\bibitem[Peters and Welling(2018)]{peters2018probabilistic}
J.~W.~T. Peters and M.~Welling.
\newblock Probabilistic binary neural networks.
\newblock \emph{arXiv preprint arXiv:1809.03368}, 2018.

\bibitem[Ranganath et~al.(2014)Ranganath, Gerrish, and
  Blei]{ranganath2014black}
R.~Ranganath, S.~Gerrish, and D.~M. Blei.
\newblock Black box variational inference.
\newblock In \emph{Artificial Intelligence and Statistics}, 2014.

\bibitem[Ranganath et~al.(2016)Ranganath, Altosaar, Tran, and
  Blei]{ranganath2016operator}
R.~Ranganath, J.~Altosaar, D.~Tran, and D.~M. Blei.
\newblock Operator variational inference.
\newblock In \emph{Advances in Neural Information Processing Systems}, 2016.

\bibitem[Reiss(2012)]{reiss2012approximate}
R.-D. Reiss.
\newblock \emph{Approximate distributions of order statistics: with
  applications to nonparametric statistics}.
\newblock Springer science \& business media, 2012.

\bibitem[Rezende et~al.(2014)Rezende, Mohamed, and
  Wierstra]{rezende2014stochastic}
D.~J. Rezende, S.~Mohamed, and D.~Wierstra.
\newblock Stochastic backpropagation and approximate inference in deep
  generative models.
\newblock In \emph{International Conference on Machine Learning}, 2014.

\bibitem[Robbins and Monro(1951)]{RobbinsMonro}
H.~Robbins and S.~Monro.
\newblock A stochastic approximation method.
\newblock \emph{Annals of Mathematical Statistics}, 22:\penalty0 400--407,
  1951.

\bibitem[Roeder et~al.(2017)Roeder, Wu, and Duvenaud]{roeder2017sticking}
G.~Roeder, Y.~Wu, and D.~Duvenaud.
\newblock Sticking the landing: Simple, lower-variance gradient estimators for
  variational inference.
\newblock In \emph{Advances in Neural Information Processing Systems}, 2017.

\bibitem[Ruiz and Titsias(2019)]{ruiz2019contrastive}
F.~J.~R. Ruiz and M.~K. Titsias.
\newblock A contrastive divergence for combining variational inference and
  {MCMC}.
\newblock In \emph{International Conference on Machine Learning}, 2019.

\bibitem[Ruiz et~al.(2016)Ruiz, Titsias, and Blei]{ruiz2016generalized}
F.~J.~R. Ruiz, M.~K. Titsias, and D.~M. Blei.
\newblock The generalized reparameterization gradient.
\newblock In \emph{Advances in Neural Information Processing Systems}, 2016.

\bibitem[Salakhutdinov and Murray(2008)]{salakhutdinov2008quantitative}
R.~Salakhutdinov and I.~Murray.
\newblock On the quantitative analysis of deep belief networks.
\newblock In \emph{Proceedings of the 25th international conference on Machine
  learning}, pages 872--879, 2008.

\bibitem[Salimans and Knowles(2014)]{salimans2014onusing}
T.~Salimans and D.~A. Knowles.
\newblock On using control variates with stochastic approximation for
  variational bayes and its connection to stochastic linear regression.
\newblock \emph{arXiv preprint arXiv:1401.1022}, 2014.

\bibitem[Shayer et~al.(2018)Shayer, Levi, and Fetaya]{shayer2018learning}
O.~Shayer, D.~Levi, and E.~Fetaya.
\newblock Learning discrete weights using the local reparameterization trick.
\newblock In \emph{International Conference on Learning Representations}, 2018.

\bibitem[Titsias and L\'{a}zaro-Gredilla(2014)]{titsias2014doubly}
M.~K. Titsias and M.~L\'{a}zaro-Gredilla.
\newblock Doubly stochastic variational {B}ayes for non-conjugate inference.
\newblock In \emph{International Conference on Machine Learning}, 2014.

\bibitem[Tucker et~al.(2017)Tucker, Mnih, Maddison, and
  Sohl-Dickstein]{tucker2017rebar}
G.~Tucker, A.~Mnih, C.~J. Maddison, and J.~Sohl-Dickstein.
\newblock {REBAR:} low-variance, unbiased gradient estimates for discrete
  latent variable models.
\newblock In \emph{International Conference on Learning Representations}, 2017.

\bibitem[Tucker et~al.(2018)Tucker, Lawson, Gu, and Maddison]{tucker2018doubly}
G.~Tucker, D.~Lawson, S.~Gu, and C.~J. Maddison.
\newblock Doubly reparameterized gradient estimators for {M}onte {C}arlo
  objectives.
\newblock In \emph{International Conference on Learning Representations}, 2018.

\bibitem[Wang et~al.(2018)Wang, Liu, and Liu]{wang2018variational}
D.~Wang, H.~Liu, and Q.~Liu.
\newblock Variational inference with tail-adaptive $f$-divergence.
\newblock In \emph{Advances in Neural Information Processing Systems}, 2018.

\bibitem[Williams(1992)]{williams1992simple}
R.~J. Williams.
\newblock Simple statistical gradient-following algorithms for connectionist
  reinforcement learning.
\newblock \emph{Machine Learning}, 8\penalty0 (3--4):\penalty0 229--256, 1992.

\bibitem[Yin and Zhou(2019)]{yin2018arm}
M.~Yin and M.~Zhou.
\newblock {ARM}: Augment-{REINFORCE}-merge gradient for stochastic binary
  networks.
\newblock In \emph{International Conference on Learning Representations}, 2019.

\bibitem[Yin et~al.(2019)Yin, Yue, and Zhou]{yin2019arsm}
M.~Yin, Y.~Yue, and M.~Zhou.
\newblock {ARSM}: Augment-{REINFORCE}-swap-merge estimator for gradient
  backpropagation through categorical variables.
\newblock In \emph{International Conference on Machine Learning}, 2019.

\end{thebibliography}
\bibliographystyle{abbrvnat}

\clearpage
\appendix

\begin{center}
    \begin{huge}
    Supplementary Material
    \end{huge}
\end{center}

\renewcommand\thefigure{\thesection.\arabic{figure}}

\section{Proof of Paper Results}

\subsection{Proof of Proposition~\ref{proposition: equivalence log-variance relative entropy}}\label{proof:PropEquivalence}
\begin{proof}We first consider the gradient of the $\textrm{\textsc{kl}}$ divergence. It is given by
\begin{equation}
\label{eq:KL gradient}
    \nabla_\phi \textrm{\textsc{kl}}(q_\phi(z) \;||\; p(z \g x)) = \int \nabla_\phi q_\phi(z) \, \mathrm{d}z + \int \log\left( \frac{q_\phi(z)}{p(z \g x)} \right) \nabla_\phi q_\phi(z) \, \mathrm{d}z,
\end{equation}
where we can drop the first term since
$\int \nabla_\phi q_\phi(z)\, \mathrm{d}z = \nabla_\phi \int q_\phi(z)\, \mathrm{d}z = \nabla_\phi(1) = 0$.

We now consider the gradient of the log-variance loss. Using the definition from \Cref{eq:log_variance_loss}, we see that
\begin{align}
	& \nabla_{\phi} \mathcal{L}_r(q_\phi(z) \;||\; p(z \g x)) = \frac{1}{2} \nabla_\phi \int \log^2\left(\frac{q_\phi(z)}{p(z \g x)}\right) r(z) \,\md z - \frac{1}{2} \nabla_\phi \left(\int \log\left(\frac{q_\phi(z)}{p(z \g x)}\right) r(z) \,\md z \right)^2 \nonumber \\
	& = \int \log\left( \frac{q_\phi(z)}{p(z \g x)} \right) \frac{\nabla_\phi q_\phi(z)}{q_\phi(z)} r(z) \,\md z
	- \left( \int \log\left( \frac{q_\phi(z)}{p(z \g x)} \right) r(z) \,\md z \right) \left(\int \frac{\nabla_\phi q_\phi(z)}{q_\phi(z)}r(z) \,\md z \right).
	\nonumber
\end{align}
When we evaluate the gradient at $r(z) = q_\phi(z)$, the right-most term vanishes, since $\int \frac{\nabla_\phi q_\phi(z)}{r(z)}r(z) \,\md z = \int \nabla_\phi q_\phi(z) \,\md z=0$. Thus, the gradient of the log-variance loss becomes equal to the gradient of the $\textrm{\textsc{kl}}$ divergence.
\end{proof}

\subsection{Proof of Lemma \ref{lemma: optimal control variate decomposition}}
\label{proof: optimal control variate decomposition}

\begin{proof}
First, notice that $\Var_{q_\phi} \left( \partial_{\phi_i} \log q_\phi \right) = \mathbb{E}_{q_\phi}[(\partial_{\phi_i}\log q_\phi)^2]$ since $\mathbb{E}_{q_\phi}[\partial_{\phi_i}\log q_\phi] = 0$. We then compute
\begin{align}
a_i^* & =  \frac{\mathbb{E}_{q_\phi} \left[f_{\phi}  (\partial_{\phi_i} \log q_\phi)^2 \right]}{\mathbb{E}_{q_\phi} \left[\left( \partial_{\phi_i} \log q_\phi \right)^2 \right]}
\\
& = \frac{\mathbb{E}_{q_\phi} \left[f_{\phi}  (\partial_{\phi_i} \log q_\phi)^2 \right] - \mathbb{E}_{q_\phi}\left[f_\phi \right] \mathbb{E}_{q_\phi}\left[(\partial_{\phi_i} \log q_\phi)^2 \right]  + \mathbb{E}_{q_\phi}\left[f_\phi \right] \mathbb{E}_{q_\phi}\left[(\partial_{\phi_i} \log q_\phi)^2 \right]} {\mathbb{E}_{q_\phi} \left[\left( \partial_{\phi_i} \log q_\phi \right)^2 \right]}
\\
& = \mathbb{E}_{q_\phi}[\bar{f}_\phi] + \delta^{\text{CV}}_i.
\end{align}
In the last line we have used the fact that $\mathbb{E}_{q_\phi}[f_\phi] = \mathbb{E}_{q_\phi}[\bar{f}_\phi]$.
\end{proof}

\subsection{Proof of Proposition \ref{prop: Cov neglectable for large D}}
\label{proof: Cov neglectable for large D}
\begin{proof}
Note that
\begin{align}
\label{eq:rel error}
\left\vert  \frac{\delta^{\text{CV}}_i}{\mathbb{E}_{q_\phi}[a^{\text{VarGrad}}]}\right\vert = \left\vert  \frac{\Cov_{q_\phi}\left( f_\phi,\left(\partial_{\phi_i} \log q_\phi\right)^2\right)}{\mathbb{E}_{q_\phi}[f_\phi]\Var_{q_\phi} \left( \partial_{\phi_i} \log q_\phi \right)} \right \vert = \left\vert  \frac{\mathbb{E}_{q_\phi}\left[ (f_\phi - \mathbb{E}_{q_\phi}[f_\phi])\left(\partial_{\phi_i} \log q_\phi\right)^2\right]}{\mathbb{E}_{q_\phi}[f_\phi]\mathbb{E}_{q_\phi} \left[ (\partial_{\phi_i} \log q_\phi)^2 \right]} \right \vert,
\end{align}
where we have used the fact that $\mathbb{E}_{q_\phi}[\partial_{\phi_i} \log q_\phi] = 0$. From $$\mathbb{E}[f_\phi] = -\mathrm{ELBO}(\phi) = \mathrm{KL}(q_\phi(z) \;||\; p(z|x)) - \log p(x),$$
and using the Cauchy-Schwarz inequality, \Cref{eq:rel error} can be bounded from above by
\begin{equation}
\frac{\left(\Var_{q_\phi}\left( \log \frac{q_\phi(z)}{p(z|x)}\right)\right)^{1/2}}{|\mathrm{KL}(q_\phi(z) \;||\; p(z|x)) - \log p(x)|}  \left(\frac{\mathbb{E}_{q_\phi} [(\partial_{\phi_i} \log q_\phi)^4]}{(\mathbb{E}_{q_\phi} [(\partial_{\phi_i} \log q_\phi)^2])^2}\right)^{1/2}.
\end{equation}
The second factor equals $\sqrt{\mathrm{Kurt}[\partial_{\phi_i} \log q_\phi}]$.
To bound the first factor, notice that
\begin{align}
& \left(\Var_{q_\phi}\left( \log \frac{q_\phi(z)}{p(z|x)}\right)\right)^{1/2} \le \left(\mathbb{E}_{q_\phi}\left[ \log^2 \frac{q_\phi(z)}{p(z|x)} \right]\right)^{1/2} =  \left(\mathbb{E}_{q_\phi}\left[ \log^2 \frac{p(z|x)}{q_\phi(z)} \right]\right)^{1/2}
\\
\label{eq:exponential moment}
&\le \left(2 \mathbb{E}_{q_\phi} \left[ \exp \left( \left| \log \frac{p(z|x)}{q_\phi(z)}\right|\right) - 1 - \left\vert \log \frac{p(z|x)}{q_\phi(z)}\right\vert \right]\right)^{1/2},
\end{align}
where we have used the estimate
\begin{equation}
e^x - 1 - x = \sum_{n=0}^\infty \frac{x^n}{n!} - 1 - x = \sum_{n=2}^\infty \frac{x^n}{n!} \ge \frac{1}{2} x^2, \quad x \ge 0, 
\end{equation}
with $x = \left\vert \log \frac{p(z|x)}{q_\phi(z)}\right\vert$. We now use \citep[Lemma 8.3]{ghosal2000convergence} to bound \Cref{eq:exponential moment} from above by
\begin{equation}
2 \sqrt{C} h(q_\phi(z) \; || \; p(z|x)),
\end{equation}
where 
\begin{equation}
h(q_\phi(z) \; || \; p(z|x)) = \sqrt{\int \left( \sqrt{q_\phi(z)} - \sqrt{p(z|x)}\right)^2 \, \mathrm{d}z}
\end{equation}
is the Hellinger distance. From \citep[Lemma A.3.5]{reiss2012approximate} we have the bound $h(q_\phi(z) \; || \; p(z|x)) \le \sqrt{\mathrm{KL}(q_\phi(z) \; || \; p(z|x))}$. Combining these estimates we arrive at the claimed result.
\end{proof}

\subsection{Proof of Proposition \ref{prop: variance comparison log-var vs. reinforce}}
\label{proof: variance comparison log-var vs. reinforce}

\begin{proof}
We start by defining the short-cuts
\begin{equation}
A = f_\phi(z), \qquad B = \left(\partial_{\phi_i} \log q_\phi \right)(z).
\end{equation}
Let us compute the difference of the variances of the estimators to leading order in $S$, namely
\begin{subequations}
\begin{align}
& \Var(\widehat{g}_{\text{Reinforce}, i}) - \Var(\widehat{g}_{\text{VarGrad}, i}) = \frac{1}{S} \Var(AB) + \frac{S-2}{S(S-1)} \E\left[(A - \E[A])(B - \E[B]) \right]^2 \\
&\qquad- \frac{\Var(A)\Var(B)}{S(S-1)}  - \frac{1}{S}\E\left[(A - \E[A])^2(B - \E[B])^2 \right] \\
&= \frac{1}{S}\left( \E\left[A^2B^2 \right] - \E[AB]^2\right) + \frac{S-2}{S(S-1)}\E[AB]^2 \\
&\qquad -\frac{1}{S}\left(\E\left[A^2B^2 \right] - 2\E[A]\E\left[AB^2\right] + \E[A]^2\E\left[B^2\right] \right) + \mathcal{O}\left(\frac{1}{S^2}\right) \\
&= -\frac{1}{S(S-1)}\E[AB]^2 - \frac{1}{S}\E[A]\left(\E[A]\E\left[B^2\right] - 2 \E\left[ AB^2\right] \right) + \mathcal{O}\left(\frac{1}{S^2}\right)\\
&= \frac{1}{S}\E[A]\left(2 \E\left[ AB^2\right] - \E[A]\E\left[B^2\right] \right) + \mathcal{O}\left(\frac{1}{S^2}\right) \\
&= \frac{1}{S}\E[A]\E\left[B^2\right]\left(2 \delta_i^{\text{CV}} + \E[A] \right) + \mathcal{O}\left(\frac{1}{S^2}\right)
\end{align}
and we note that with $\E\left[B^2\right] > 0$ the leading term is positive if
\begin{align}
\E[A] \delta_i^{\text{CV}} + \frac{1}{2} \E[A]^2 > 0,
\end{align}
\end{subequations}
which is equivalent to the statement in the proposition.
\end{proof}

\subsection{Proof of Corollary \ref{corollary: variance comparison log-var vs. reinforce}}
\label{proof: corollary variance comparison log-var vs. reinforce}
\begin{proof}
Note that with Proposition \ref{prop: Cov neglectable for large D} we have
\begin{align}
     \left|\frac{\delta^{\text{CV}}_i}{\mathbb{E}_{q_\phi}[a^{\text{VarGrad}}]}\right| \to 0
\end{align}
for $D \to \infty$, assuming that $\textrm{\textsc{kl}}(q_\phi(z) \;||\; p(z \g x))$ is strictly increasing in $D$. Therefore, for large enough $D$, the condition from \Cref{prop: variance comparison log-var vs. reinforce} (see \Cref{eq:comparison bound}),
is fulfilled and the statement follows immediately. 
\end{proof}

\subsection{Results on the Kurtosis of the Score for Exponential Families}
\label{app:exponential families}
Here we provide a more explicit expression for $\mathrm{Kurt}[\partial_{\phi_i} \log q_\phi]$ in the case when $q_\phi(z)$ is given by an exponential family, i.e. $q_\phi(z) = h(z) \exp\left(\phi^\top T(z) - A(\phi)\right)$, where $T(z)$ is the vector of sufficient statistics and $A(\phi)$ denotes the log-partition function. As an application, we show that in the Gaussian case, $\mathrm{Kurt}[\partial_{\phi_i} \log q_\phi]$ is uniformly bounded across the whole variational family.
\begin{lemma}
Let $q_\phi(z) = h(z) \exp\left(\phi^\top T(z) - A(\phi)\right)$. Then
\begin{equation}
\mathrm{Kurt}[\partial_{\phi_i} \log q_\phi] =   \frac{\bE_{q_\phi}\left[(T_i(z) - m_i)^4\right]}{\bE_{q_\phi}\left[(T_i(z) - m_i)^2\right]^2},
\end{equation}
where $m_i = \bE_{q_\phi}[T_i(z)]$ denotes the mean of the sufficient statistics. In particular, $\mathrm{Kurt}[\partial_{\phi_i} \log q_\phi]$ does not depend on $h(z)$ or $A(\phi)$. 
\end{lemma}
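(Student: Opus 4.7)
The plan is to exploit the fact that for an exponential family in natural parameterisation, the derivative of the log-density with respect to the natural parameter has a particularly simple form, namely a centered sufficient statistic. The rest is a direct substitution into the definition of kurtosis.

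First, I would write out $\log q_\phi(z) = \log h(z) + \phi^\top T(z) - A(\phi)$ and differentiate with respect to $\phi_i$, obtaining
\[
\partial_{\phi_i} \log q_\phi(z) = T_i(z) - \partial_{\phi_i} A(\phi).
\]
Next, I would invoke the standard exponential family identity $\partial_{\phi_i} A(\phi) = \mathbb{E}_{q_\phi}[T_i(z)] = m_i$, which follows by differentiating $\int h(z)\exp(\phi^\top T(z) - A(\phi))\,\mathrm{d}z = 1$ under the integral sign. Combining these two observations yields
\[
\partial_{\phi_i} \log q_\phi(z) = T_i(z) - m_i,
\]
which is already a centred random variable under $q_\phi$, consistent with the familiar fact that the score has mean zero.

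Substituting this identity into the definition of the kurtosis gives
\[
\mathrm{Kurt}[\partial_{\phi_i} \log q_\phi] = \frac{\mathbb{E}_{q_\phi}[(T_i(z) - m_i)^4]}{\mathbb{E}_{q_\phi}[(T_i(z) - m_i)^2]^2},
\]
as claimed. Since this expression depends only on the distribution of $T_i(z)$ under $q_\phi$ and on $m_i$, and since $h(z)$ and $A(\phi)$ enter only as the carrier measure and normalising constant, they do not appear explicitly in the final formula; they affect the kurtosis only through their influence on the law of $T_i(z)$ under $q_\phi$.

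There is no real obstacle here: the argument is a two-line computation using the exponential family gradient identity. The only thing to be careful about is verifying the interchange of differentiation and integration needed to establish $\partial_{\phi_i} A(\phi) = m_i$, which holds under the usual regularity assumptions on exponential families (e.g.\ $\phi$ lying in the interior of the natural parameter space). The claimed application to Gaussian variational families would then follow because the sufficient statistics of a Gaussian are polynomials in $z$ with uniformly bounded standardised fourth moments across the parameter space, but that is a separate (and not requested) verification.
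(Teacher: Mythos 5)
Your proof is correct and follows essentially the same route as the paper's: differentiate $\log q_\phi$ to get $\partial_{\phi_i}\log q_\phi = T_i(z) - \partial_{\phi_i}A(\phi)$, establish $\partial_{\phi_i}A(\phi) = m_i$ by differentiating the normalisation condition, and substitute into the definition of the kurtosis. Your added remark about justifying the interchange of differentiation and integration is a point the paper leaves implicit, but the argument is otherwise identical.
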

\begin{proof}
The claim follows by direct calculation. Indeed,
\begin{align}
\partial_{\phi_i} \log q_\phi(z) = T_i(z) - \frac{\partial A}{\partial \phi_i}(\phi).
\end{align}
It is left to show that $\frac{\partial A}{\partial \phi_i}(\phi) = \mu_i$. For this, notice that the normalisation condition
\begin{equation}
\int h(z) \exp\left(\phi^\top T(z) - A(\phi)\right)\mathrm{d}z = 1
\end{equation}
implies 
\begin{equation}
\int h(z)\left( T_i(z) - \frac{\partial A(\phi)}{\partial \phi_i}\right) \exp\left(\phi^\top T(z) - A(\phi)\right)\mathrm{d}z = 0
\end{equation}
by taking the derivative w.r.t. $\phi_i$. The left-hand side equals $\mathbb{E}_{q_\phi}[T_i(z)] - \frac{\partial A}{\partial \phi_i}(\phi)$, and so the claim follows.
\end{proof}

\begin{lemma}\label{app:lem:GaussiansKurtosis}
Let $q_\phi(z)$ be the family of one-dimensional Gaussian distributions. Then there exists a constant $K>0$ such that
\begin{equation}
\mathrm{Kurt}[\partial_{\phi_i} \log q_\phi] < K
\end{equation}
for all $i$ and all $\phi \in \Phi$. In fact, it is possible to take $K=15$.
\end{lemma}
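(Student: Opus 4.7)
The plan is to invoke the preceding lemma on exponential families and reduce the problem to a bounded optimisation over the sufficient statistics of the Gaussian. In the natural parameterisation, the one-dimensional Gaussian family with mean $\mu$ and variance $\sigma^2$ is an exponential family with natural parameters $\phi_1 = \mu/\sigma^2$ and $\phi_2 = -1/(2\sigma^2)$ and sufficient statistics $T_1(z) = z$ and $T_2(z) = z^2$. The preceding lemma then gives
\begin{equation}
\mathrm{Kurt}[\partial_{\phi_i} \log q_\phi] = \frac{\mathbb{E}_{q_\phi}[(T_i(z) - m_i)^4]}{\mathbb{E}_{q_\phi}[(T_i(z) - m_i)^2]^2},
\end{equation}
with $m_i = \mathbb{E}_{q_\phi}[T_i(z)]$, so it suffices to bound these two ratios uniformly in $(\mu, \sigma)$.

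For the first coordinate, $T_1(z) - m_1 = z - \mu$ is $\mathcal{N}(0, \sigma^2)$, so its kurtosis is exactly $3$, trivially below $15$. The substantive case is $T_2(z) = z^2$. I would write $z = \mu + \sigma W$ with $W \sim \mathcal{N}(0,1)$ and set $a = 2\mu\sigma$, $b = \sigma^2$, which yields the clean identity
\begin{equation}
z^2 - m_2 = a W + b(W^2 - 1),
\end{equation}
so all moments reduce to standard Gaussian moments. Using $\mathbb{E}[W^{2k}] = (2k-1)!!$ for $k = 1, 2, 3, 4$ and the vanishing of odd moments, a direct (but mechanical) expansion of the binomials gives
\begin{equation}
\mathbb{E}[(z^2 - m_2)^2] = a^2 + 2 b^2, \qquad \mathbb{E}[(z^2 - m_2)^4] = 3 a^4 + 60 a^2 b^2 + 60 b^4.
\end{equation}

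The final step is to show that the resulting ratio is bounded by $15$. Introducing $t = a^2 / b^2 = 4\mu^2/\sigma^2 \geq 0$ (the denominator never vanishes since $\sigma > 0$), the kurtosis becomes
\begin{equation}
f(t) = \frac{3 t^2 + 60 t + 60}{(t + 2)^2},
\end{equation}
and I would finish by computing $f'(t) = -48 t / (t+2)^3$, which is non-positive on $[0, \infty)$. Hence $f$ attains its supremum at $t = 0$, giving $f(0) = 60/4 = 15$, while $f(t) \to 3$ as $t \to \infty$. Combining with the bound of $3$ for the mean-coordinate kurtosis yields $\mathrm{Kurt}[\partial_{\phi_i} \log q_\phi] < 15$ uniformly in $i$ and $\phi$, as required. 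There is no conceptual obstacle: the only mildly delicate step is the binomial expansion of $(aW + b(W^2-1))^4$ and correctly collecting the surviving even-moment contributions, but everything reduces to an elementary one-variable maximisation once the substitution $t = a^2/b^2$ is made.
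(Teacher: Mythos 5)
Your proposal is correct and follows essentially the same route as the paper: invoke the preceding exponential-family lemma, note that the $T_1=z$ coordinate has kurtosis $3$, and reduce the $T_2=z^2$ coordinate to an explicit moment computation whose maximum over the parameters is $15$ at $\mu=0$ (your $f(t)$ with $t=4\mu^2/\sigma^2$ is exactly the paper's expression $3(4\mu^4+20\mu^2\sigma^2+5\sigma^4)/(2\mu^2+\sigma^2)^2$). The substitution $z^2-m_2=aW+b(W^2-1)$ and the one-variable monotonicity argument are a tidier way to carry out what the paper calls a ``lengthy but straightforward computation,'' but the argument is the same.
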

\begin{proof}
For the Gaussian family, the sufficient statistics are given by $T_1(z) = z$ and $T_2(z) = z^2$.
We have that 
\begin{equation}
\frac{\bE_{\mathcal{N}(\mu,\sigma^2)}\left[(T_1(z) - m_1)^4\right]}{\bE_{\mathcal{N}(\mu,\sigma^2)}\left[(T_1(z) - m_1)^2\right]^2} = 
\frac{\bE_{\mathcal{N}(\mu,\sigma^2)}\left[(z - \mu)^4\right]}{\bE_{\mathcal{N}(\mu,\sigma^2)}\left[(z - \mu)^2\right]^2} = 3,
\end{equation}
by the well-known fact the standard kurtosis of any univariate Gaussian is $3$. A lengthy but straightforward computation shows that 
\begin{equation}
\frac{\bE_{\mathcal{N}(\mu,\sigma^2)}\left[(T_2(z) - m_2)^4\right]}{\bE_{\mathcal{N}(\mu,\sigma^2)}\left[(T_2(z) - m_2)^2\right]^2} = \frac{3(4 \mu^4 + 20 \mu^2 \sigma^2 + 5 \sigma^4)}{(2 \mu^2 + \sigma^2)^2},
\end{equation}
which is maximised for $\mu = 0$, taking the value $15$.
\end{proof}
Lemma~\ref{app:lem:GaussiansKurtosis} shows that the kurtosis term in our bound \Cref{eq:cor bound} can be bounded for Gaussian families. This result is expected to extend to the multivariate cases as well. We note that we observe in our experiments that the bound is finite in a variety of cases.
\subsection{Dimension-dependence of the $\mathrm{KL}$-divergence}
\label{app:KL}

The following lemma shows that the $\mathrm{KL}$-divergence increases with the number of dimensions.  This result follows from the chain-rule of KL divergence, see, e.g., \citet{cover2012elements}.

\begin{lemma}
\label{lem:KL}
Let $u^{(D)}(z_1,\ldots,z_D)$ and $v^{(D)}(z_1,\ldots,z_D)$ be two arbitrary probability distributions on $\mathbb{R}^D$. For $J \in  \{1 \ldots,D \}$ denote their marginals on the first $J$ coordinates by $u^{(J)}$ and $v^{(J)}$, i.e. 
\begin{equation}
u^{(J)}(z_1,\ldots,z_J) = \idotsint u^{(D)}(z_1, \ldots, z_D) \, \mathrm{d}z_{J+1} \ldots \mathrm{d}z_D, 
\end{equation}
and 
\begin{equation}
v^{(J)}(z_1,\ldots,z_J) = \idotsint v^{(D)}(z_1, \ldots, z_D) \, \mathrm{d}z_{J+1} \ldots \mathrm{d}z_D.
\end{equation}
Then \begin{equation}
\mathrm{KL}(u^{(1)} \; || \; v^{(1)}) \le \mathrm{KL}(u^{(2)} \; || \; v^{(2)}) \le \ldots \le \mathrm{KL}(u^{(D)} \; || \; v^{(D)}),
\end{equation}
i.e. the function $J \mapsto \mathrm{KL}(u^{(J)} \; || \; v^{(J)})$ is increasing.
\end{lemma}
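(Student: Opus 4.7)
The plan is to reduce the claim to an iterated application of the chain rule for Kullback--Leibler divergence, together with its non-negativity. Concretely, I would show that for every $J \in \{1,\ldots,D-1\}$ one has
\begin{equation}
\mathrm{KL}(u^{(J+1)} \; \|\; v^{(J+1)}) = \mathrm{KL}(u^{(J)} \; \|\; v^{(J)}) + \mathbb{E}_{u^{(J)}}\!\left[\mathrm{KL}(u(z_{J+1}\g z_{1:J}) \; \|\; v(z_{J+1}\g z_{1:J}))\right],
\end{equation}
where $u(z_{J+1}\g z_{1:J}) = u^{(J+1)}/u^{(J)}$ and similarly for $v$. Since the conditional-KL term on the right is an expectation of a non-negative quantity, monotonicity $\mathrm{KL}(u^{(J)} \;\|\; v^{(J)}) \le \mathrm{KL}(u^{(J+1)} \;\|\; v^{(J+1)})$ follows instantly, and iterating from $J=1$ to $J=D-1$ yields the full chain of inequalities in the lemma.

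To prove the chain-rule identity, first I would factorise $u^{(J+1)}(z_{1:J+1}) = u^{(J)}(z_{1:J})\, u(z_{J+1}\g z_{1:J})$ (and likewise for $v$) on the set where $u^{(J)} > 0$; note that measure-theoretic issues are innocuous since $u^{(J+1)}$ assigns zero mass to $\{u^{(J)} = 0\}$, so the conditional density can be defined arbitrarily on that set. Then take the logarithm of the ratio $u^{(J+1)}/v^{(J+1)}$ and split it as
\begin{equation}
\log \frac{u^{(J+1)}(z_{1:J+1})}{v^{(J+1)}(z_{1:J+1})} = \log\frac{u^{(J)}(z_{1:J})}{v^{(J)}(z_{1:J})} + \log\frac{u(z_{J+1}\g z_{1:J})}{v(z_{J+1}\g z_{1:J})}.
\end{equation}
Integrating against $u^{(J+1)}$ and marginalising the first summand in $z_{J+1}$ reproduces the claimed identity; the inner $z_{J+1}$-integration of the second summand yields the conditional $\mathrm{KL}$, and the outer integration is precisely $\mathbb{E}_{u^{(J)}}[\cdot]$.

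There is no serious obstacle here since this is essentially the classical chain rule for relative entropy (equivalently, a special case of the data-processing inequality, where the "channel" is deterministic marginalisation). The only point that requires care is the absolute continuity bookkeeping: I would briefly note that if $v^{(J+1)}$ does not dominate $u^{(J+1)}$ then both sides of the lemma are $+\infty$ above some threshold index and the inequality is trivial, whereas under absolute continuity the decomposition above makes sense for $u^{(J+1)}$-almost every $z_{1:J+1}$. With these conventions in place, the argument is a one-line induction.
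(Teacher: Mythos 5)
Your proposal is correct and matches the paper's argument: the paper proves this lemma by simply invoking the chain rule for KL divergence (citing Cover and Thomas), which is precisely the decomposition $\mathrm{KL}(u^{(J+1)} \,\|\, v^{(J+1)}) = \mathrm{KL}(u^{(J)} \,\|\, v^{(J)}) + \mathbb{E}_{u^{(J)}}[\mathrm{KL}(u(z_{J+1} \g z_{1:J}) \,\|\, v(z_{J+1} \g z_{1:J}))]$ that you derive and iterate. Your write-up just makes explicit (with the absolute-continuity bookkeeping) what the paper leaves to the citation.
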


\newpage
\section{Details of Experimental Setup and Additional Results}
\label{app:details_experiments}
\subsection{Details of the Experiments}
\subsubsection{Logistic Regression}
This section describes the experimental setup of the Bayesian logistic regression example which was discussed in the main text in Section \ref{sec:experiments}.

\parhead{Data.} We use a synthetic dataset with $N=100$, where input-output pairs are generated as follows: we sample a design matrix $X \in \mathbb{R}^{N \times D}$ for the inputs uniformly on $[-1, 1]$, random weights $\w \in \mathbb{R}^D$ from $\mathcal{N}(0, 25 \,\text{Id}_{D \times D})$ and a random bias $b \in \mathbb{R}$ from $\mathcal{N}(0, 1)$. We set $\mathbf{p} = \sigma(X \w + b \, \mathbf{1})$, where $\mathbf{1}$ is an $N$-dimensional vector of ones and $\sigma(x) = \frac{1}{1 + \exp(-x)}$ is the logistic sigmoid applied elementwise. Finally, we sample the outputs $Y \sim \text{Bernoulli}(\mathbf{p})$.

\parhead{Approximate Posterior.} For this model, we set the approximate posterior to a diagonal Gaussian with free mean and log standard deviation parameters.

\parhead{Training.} For all the experiments listed in the main text, we use the VarGrad estimator for the gradients of the logistic regression models. We train the models using stochastic gradient descent \citep{RobbinsMonro} with a learning rate of $0.001$.

\parhead{Estimation of intractable quantities.} To estimate the intractable quantities in \Cref{fig:maindeltacv_log_reg}, we use Monte Carlo sampling with 2000 samples for $\delta^{\text{CV}}$ and $\mathbb{E}_{q_\phi}[a^{\text{VarGrad}}]$. We estimate the KL divergence with the identity $\textsc{KL}(q_\phi(z) \| p(z | x)) = \log p(x) - \text{ELBO}(\phi)$, where $\log p(x)$ is estimated using importance sampling with $10000$ samples and $\text{ELBO}(\phi)$ using standard Monte Carlo sampling with 2000 samples. 

For the variance estimates in \Cref{fig:logistic_regression_variance}, we use 1000 Monte Carlo samples. As explained in the main text, to estimate the control variate coefficients, we use either 2 samples for the \emph{sampled estimator} or 1000 samples for the \emph{oracle estimator}.

\subsection{Discrete VAEs}
This section describes the experimental setting for the Discrete VAE, where we closely follow the setup in \citet{Maddison2017TheCD}, which was also replicated in \citet{tucker2017rebar} and \citet{grathwohl2018backpropagation}. As we are comparing the usefulness of different estimators in the optimisation and time their run-times, we opted to re-implement the various methods using JAX \citep{jax2018github,haiku2020github}. Extra care was taken to be as faithful as possible to the implementation description in the respective papers as well as in optimising the run-time of the implementations.

\parhead{Data.} We use a fixed binarisation of Omniglot \citep{lake2015human}, where we binarise at the standard cut-off of 0.5. We use the standard train/test splits for this dataset.

\parhead{Model Architectures.} For the DVAE experiments we use the \emph{two layers linear} architecture, which has 2 stochastic binary layers with 200 units each, which was used in \citet{Maddison2017TheCD}. For this model, the decoders mirror the corresponding encoders. We use a Bernoulli(0.5) prior on the latent space and fix its parameters throughout the optimisation.

\parhead{Approximate Posterior.} We use an amortised mean-field Bernoulli approximation for the posterior.

\parhead{Training.} For training the models, we use the Adam optimiser \citep{kingma2015adam} with learning rates 0.001, 0.0005 and 0.0001.

\parhead{Estimation of intractable quantities}. We use Monte Carlo sampling with 2000 samples for $\delta^{\text{CV}}$ and $\mathbb{E}_{q_\phi}[a^{\text{VarGrad}}]$. Due to the high memory requirements of these computations and sparsity of the weight gradients, we only compute them for the biases. To estimate the gradient variances we use 1000 Monte Carlo samples. 
\subsection{Additional Results for DVAEs}
\subsubsection{Variance Reduction}
In \Cref{fig:dvae_variance} we present additional results on the practical variance reduction that VarGrad induces in the two layer linear DVAE. Here, we compare with various other estimators from the literature. VarGrad achieves considerable variance reduction over the adaptive (RELAX) and non-adaptive (Controlled Reinforce) model-agnostic estimators. Structured adaptive estimators such as Dynamic REBAR and RELAX + REBAR start with a higher variance at the beginning of optimisation, which reduces towards the end. ARM, which uses antithetic sampling, achieves the most reduction; however, it is only applicable to models with Bernoulli latent variables. Notably, the extra variance reduction seen in some of the methods does not translate to better optimisation performance on this example as seen in \Cref{appx:vae-results}.
\FloatBarrier
\begin{figure}[h!]
\centering
\includegraphics[width=\linewidth]{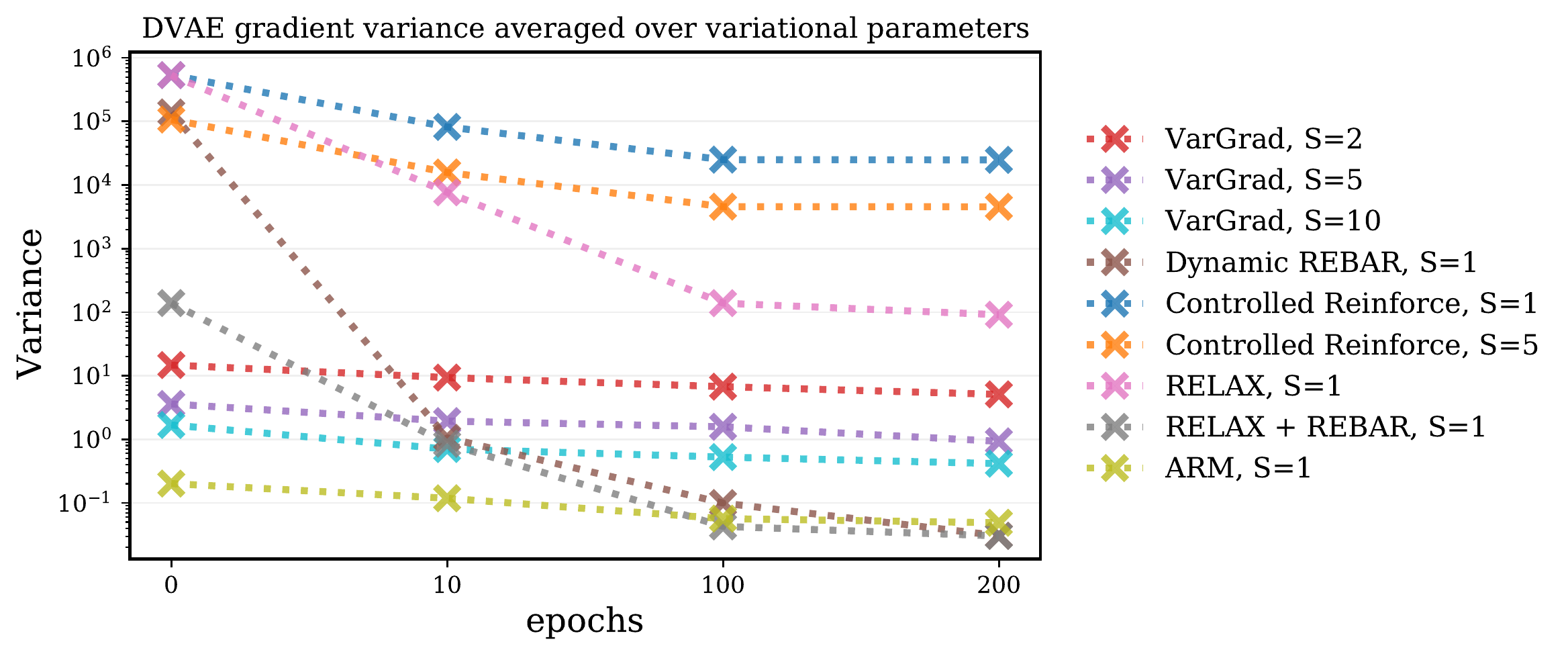}
\caption{Estimates of the gradient variance of the DVAE at 4 points during the optimisation for different gradient estimators. The plot compares VarGrad to Reinforce with score function control variates \citep{ranganath2014black}, dynamic \textsc{rebar} \citep{tucker2017rebar}, \textsc{relax}, \textsc{relax} + \textsc{rebar} \citep{grathwohl2018backpropagation} and ARM \citep{yin2018arm}. The number of samples used to compute each gradient estimator is given in the figure legend.}
\label{fig:dvae_variance}
\end{figure}

\FloatBarrier
\subsubsection{Performance in Optimisation}
\label{appx:vae-results}
In this section we present additional results on training the DVAE with VarGrad. \Cref{fig:omniglot_appendix_1} replicates \Cref{fig:omniglot_main} with a longer run-time. \Cref{fig:omniglot_appendix_2} and \Cref{fig:omniglot_appendix_3} show the optimisation traces for different Adam learning rates.

\begin{figure}[h]
\centering
\includegraphics[width=\linewidth]{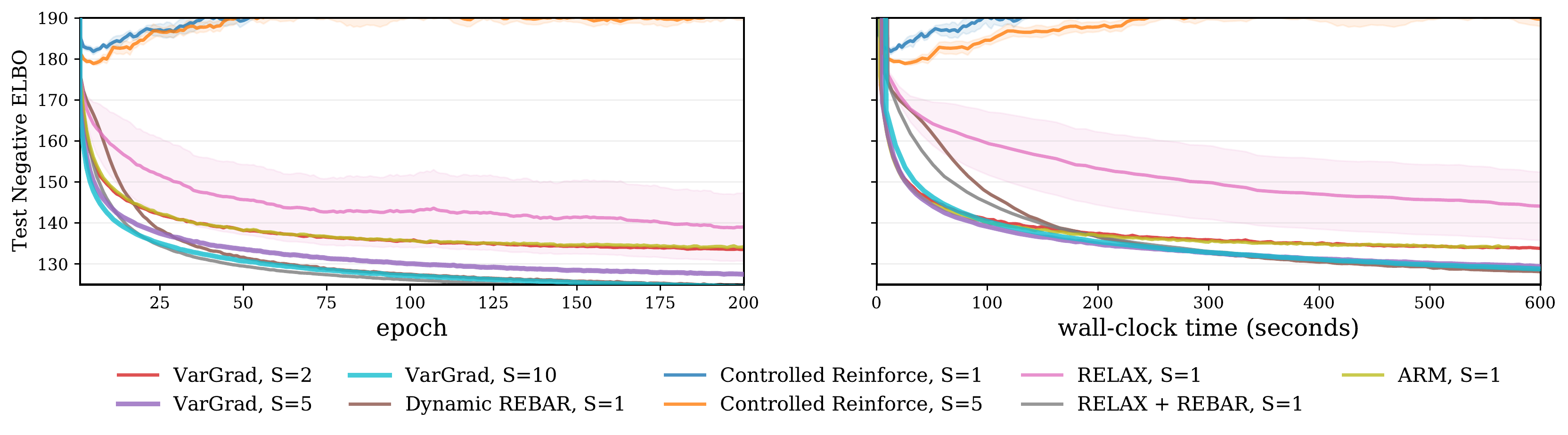}
\caption{Optimisation trace versus epoch (left) and wall-clock time (right) for a two-layer linear \gls{DVAE} on a fixed binarisation of Omniglot, trained with Adam with a learning rate of 0.001. The plot compares VarGrad to Reinforce with score function control variates \citep{ranganath2014black}, dynamic \textsc{rebar} \citep{tucker2017rebar}, \textsc{relax}, \textsc{relax} + \textsc{rebar} \citep{grathwohl2018backpropagation} and ARM \citep{yin2018arm}. The number of samples used to compute each gradient estimator is given in the figure legend. The results here are identical to the ones in \Cref{fig:omniglot_main} but with a longer run-time.}
\label{fig:omniglot_appendix_1}
\end{figure}

\begin{figure}[h]
\centering
\includegraphics[width=\linewidth]{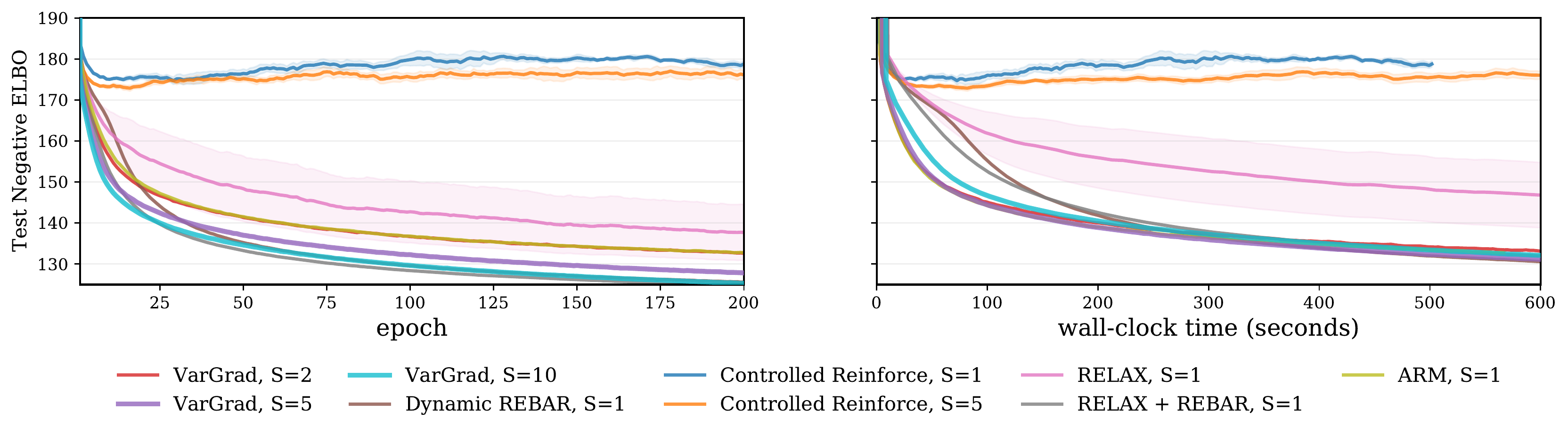}
\caption{Optimisation trace versus epoch (left) and wall-clock time (right) for a two-layer linear \gls{DVAE} on a fixed binarisation of Omniglot, trained with Adam with a learning rate of 0.0005. The plot compares VarGrad to Reinforce with score function control variates \citep{ranganath2014black}, dynamic \textsc{rebar} \citep{tucker2017rebar}, \textsc{relax}, \textsc{relax} + \textsc{rebar} \citep{grathwohl2018backpropagation} and ARM \citep{yin2018arm}. The number of samples used to compute each gradient estimator is given in the figure legend.}
\label{fig:omniglot_appendix_2}
\end{figure}

\begin{figure}[h]
\centering
\includegraphics[width=\linewidth]{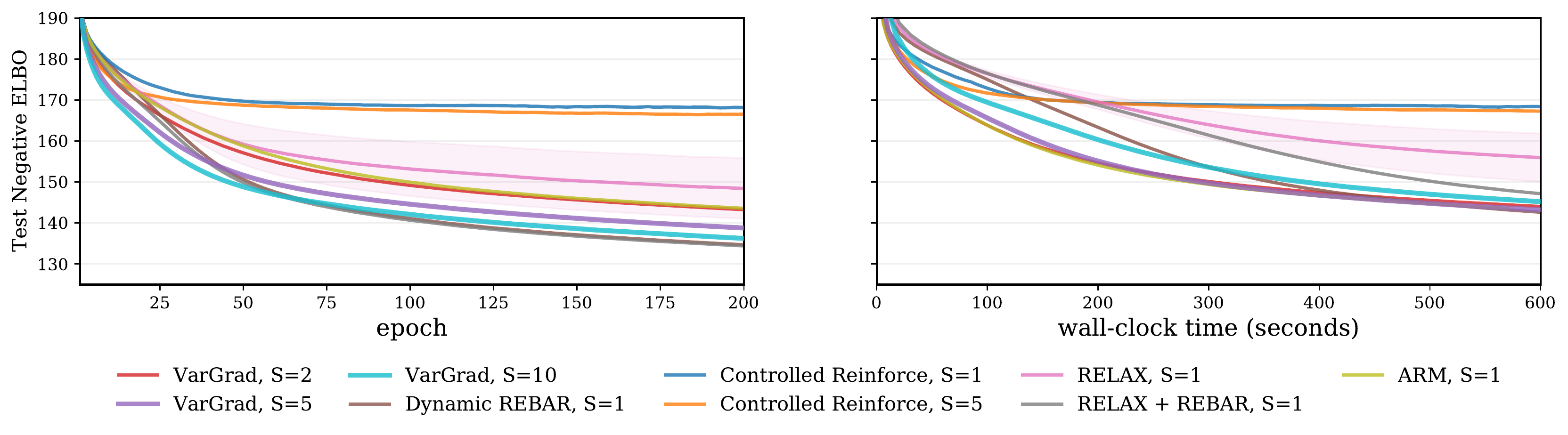}
\caption{Optimisation trace versus epoch (left) and wall-clock time (right) for a two-layer linear \gls{DVAE} on a fixed binarisation of Omniglot, trained with Adam with a learning rate of 0.0001. The plot compares VarGrad to Reinforce with score function control variates \citep{ranganath2014black}, dynamic \textsc{rebar} \citep{tucker2017rebar}, \textsc{relax}, \textsc{relax} + \textsc{rebar} \citep{grathwohl2018backpropagation} and ARM \citep{yin2018arm}. The number of samples used to compute each gradient estimator is given in the figure legend.}
\label{fig:omniglot_appendix_3}
\end{figure}

\FloatBarrier
\newpage
\section{Gaussians}\label{app:gaussians}

In the case when $q(z)$ and $p(z|x)$ are (diagonal) Gaussians we can gain some intuition on the performance of VarGrad by computing the relevant quantities analytically. The principal insights obtained from the examples presented in this section can be summarised as follows: Firstly, in certain scenarios the Reinforce estimator does indeed exhibit a lower variance in comparison with VarGrad (although the advantage is very modest and only materialises for a restricted set of parameters). This finding illustrates that the conditions in \Cref{eq:comparison bound} and \Cref{eq:better variance} (the latter referring to $S \ge S_0$) cannot be dropped without replacement from the formulation of \Cref{prop: variance comparison log-var vs. reinforce}. Secondly, in line with the results from \Cref{sec:experiments}, the relative error   $\delta_i^{\textrm{CV}}/\bE[a^{\textnormal{VarGrad}}]$ decreases with increased dimensionality. Moreover, the variance associated to computing the optimal control variate coefficients $a^*$ is significant and increases considerably with the number of latent variables.

\subsection{Comparing the Variances of Reinforce and VarGrad}

\label{appendix: Covariance terms for diagonal Gaussians}

In order to understand when the variance of VarGrad is smaller than the variance of the Reinforce estimator we first consider the one-dimensional Gaussian case $q(z) = \mathcal{N}(z; \mu, \sigma^2)$ and $p(z|x) = \mathcal{N}(z; \widetilde{\mu}, \widetilde{\sigma}^2)$ and analyse the derivative w.r.t. $\mu$. A lengthy calculation shows that
\begin{subequations}
\label{eqn: Variance difference 1d Gaussian}
\begin{align}
\Delta_{\Var}&(\mu, \widetilde{\mu}, \sigma^2, \widetilde{\sigma}^2, S) := \Var(\widehat{g}_{\text{Reinforce},\mu}) - \Var(\widehat{g}_{\text{VarGrad},\mu}) \\
\label{eqn: Variance difference 1d Gaussian - formula}
&=\frac{1}{4 S {\sigma}^4\widetilde{\sigma}^2}\left((\mu - \widetilde{\mu})^4 + 2(\mu - \widetilde{\mu})^2\left(\frac{3S - 7}{S-1} {\sigma}^2 - 3\widetilde{\sigma}^2\right) +\frac{5S - 7}{S-1}\left( {\sigma}^2 - \widetilde{\sigma}^2\right)^2\right) \\
\label{eqn: Variance difference 1d Gaussian - approx}
&\approx \frac{1}{4 S {\sigma}^4\widetilde{\sigma}^2} \left( \Delta_\mu^4 + 6 \Delta_\mu^2 \Delta_{\sigma^2} + 5 \Delta_{\sigma^2}^2 \right),
\end{align}
\end{subequations}
where the last line holds for large $S$ with $\Delta_\mu:=\mu - \widetilde{\mu}$ and $\Delta_{\sigma^2} := {\sigma}^2 - \widetilde{\sigma}^2$.

For an illustration, let us vary the above parameters. First, let us fix $\sigma^2 = \widetilde{\sigma}^2 = 1$. We note from \Cref{eqn: Variance difference 1d Gaussian - approx} that in this case we expect VarGrad to have lower variance regardless of $\Delta_\mu$ as long as $S$ is large enough. In Figure \ref{fig: variance difference varying mu} we see that this is in fact the case, however a different result can be observed for small $S$, which is again in accordance with \Cref{eqn: Variance difference 1d Gaussian - formula}.

\begin{figure}[t]
\centering
\includegraphics[width=1\linewidth]{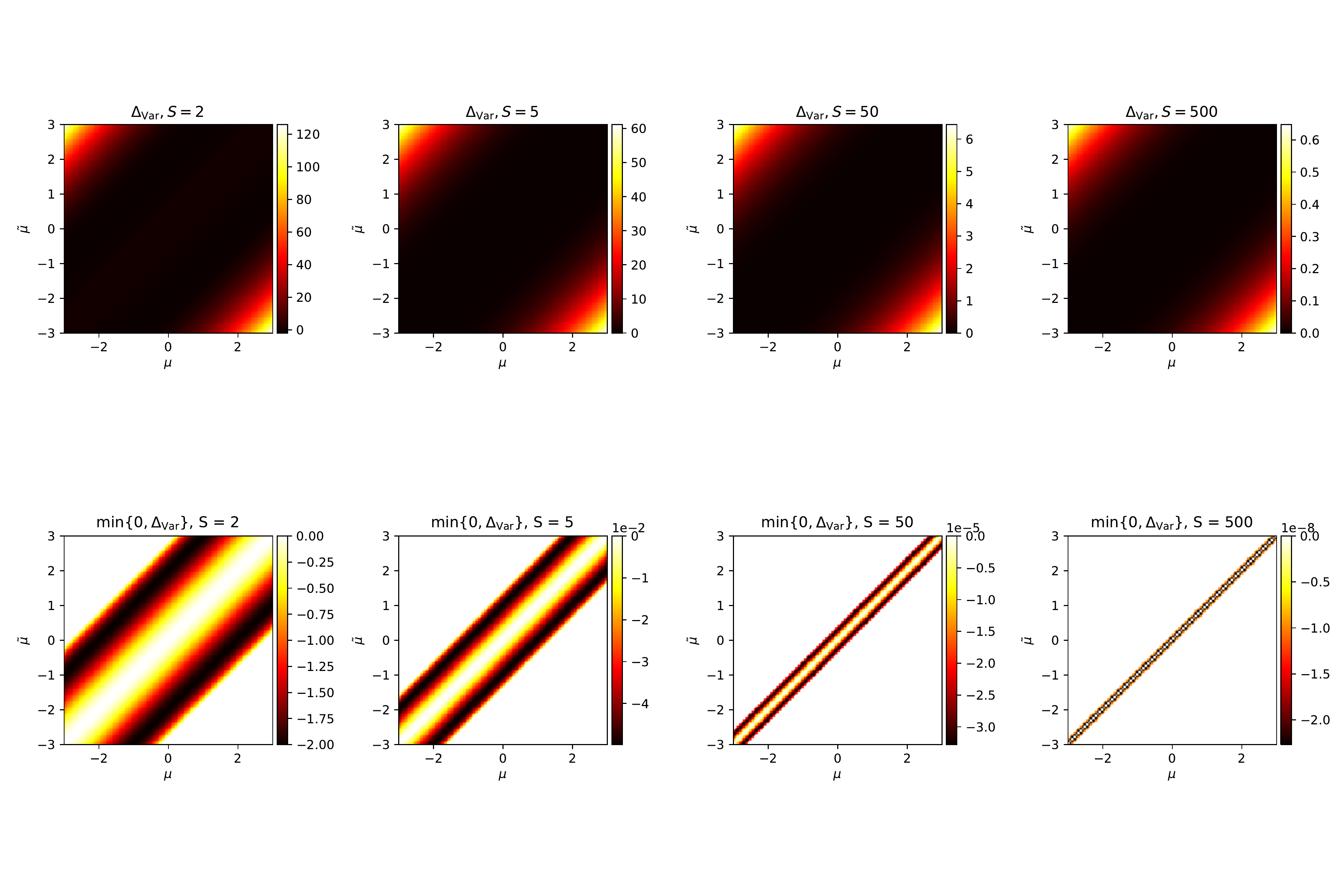}
\caption{We compare the variance of the reinforce estimator with the variance of VarGrad. VarGrad is often better even for small $S$ -- for large $S$ this can be guaranteed with Proposition \ref{prop: variance comparison log-var vs. reinforce}.}
\label{fig: variance difference varying mu}
\end{figure}

Next, we consider arbitrary $\sigma^2$ and $\widetilde{\sigma}^2$, but fixed $\mu=1,\widetilde{\mu}=2$. In Figure \ref{fig: variance difference varying sigma} we observe that the variance of VarGrad is smaller for most values of $\sigma^2$ and $\widetilde{\sigma}^2$. However, even for large $S$ there remains a region where the Reinforce estimator is superior. In fact, one can compute the condition for this to happen to be $\Delta_{\sigma
^2} \in \left[-\Delta_{\mu}^2, -\frac{1}{5}\Delta_{\mu}^2 \right]$, which can be compared with the condition in \Cref{eq:comparison bound} in Proposition \ref{prop: variance comparison log-var vs. reinforce}.

\begin{figure}[t]
\centering
\includegraphics[width=1\linewidth]{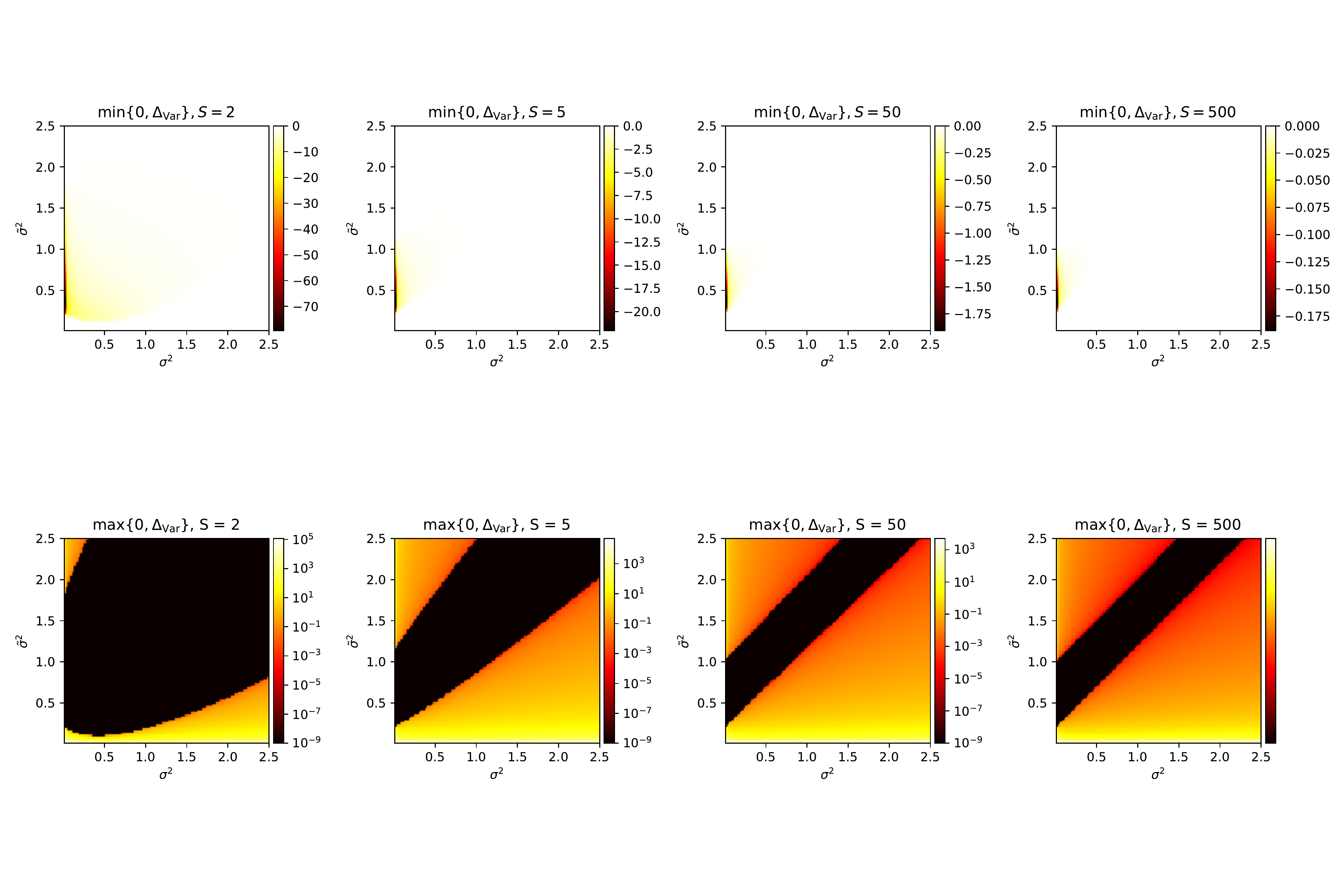}
\caption{Variance comparison with varying $\sigma^2$ and $\widetilde{\sigma}^2$. VarGrad only wins outside a certain region, however, if so, then potentially by orders of magnitude.}
\label{fig: variance difference varying sigma}
\end{figure}

In Figure \ref{fig: Gaussian-delta-mu-sigma} we display the variance differences $\Delta_{\Var}$ as functions of $\Delta_{\mu}$ and $\Delta_{\sigma^2}$, approximated according to \Cref{eqn: Variance difference 1d Gaussian - approx}, for the same fixed values as before and see that they are bounded from below, but not from above.

\begin{figure}[t]
\centering
\includegraphics[width=1\linewidth]{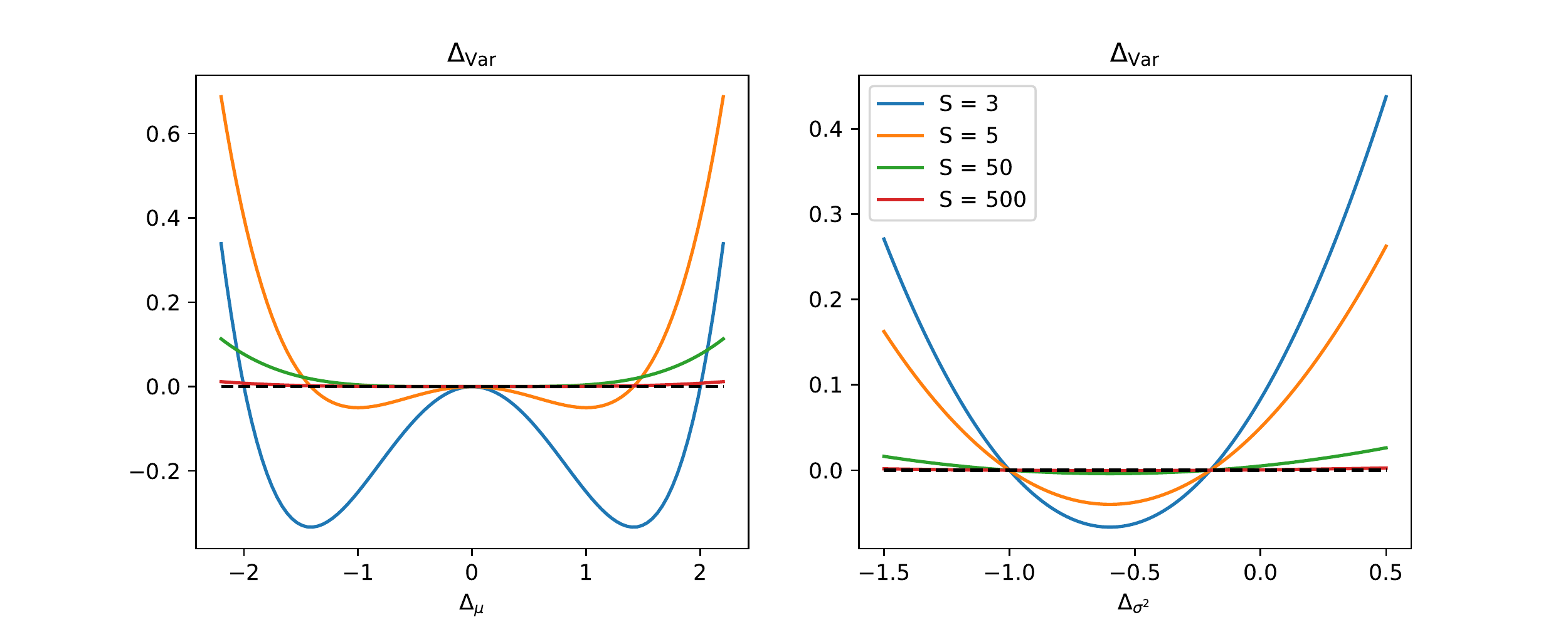}
\caption{Variance differences of the reinforce estimator and VarGrad with varying $\Delta_\mu$ and $\Delta_{\sigma^2}$ for different sample sizes $S$.}
\label{fig: Gaussian-delta-mu-sigma}
\end{figure}

For a $D$-dimensional Gaussian it is hard to compute the condition from \Cref{eq:comparison bound} in full generality, but we can derive the following stronger criterion that can guarantee better performance of VarGrad when assuming that $\mathrm{ELBO}(\phi) \le 0$ (which for instance holds in the discrete-data setting). 

\begin{lemma}
Assume $\mathrm{ELBO}(\phi) \le 0$ and 
\begin{equation}
\label{eqn: stronger criterion for VarGrad vs. Reinforce}
    \Cov\left(f_\phi, \left(\partial_{\phi_i} \log q_\phi \right)^2\right) > 0.
\end{equation}
Then there exists $S_0 \in \mathbb{N}$ such that 
\begin{equation}
\Var \left(\widehat{g}_{\text{VarGrad}, i}(\phi)\right) \leq \Var \left(\widehat{g}_{ \text{Reinforce}, i}(\phi)\right), \quad \quad \textnormal{for all} \quad S \ge S_0.
\end{equation}
\end{lemma}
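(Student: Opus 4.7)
My plan is to derive this lemma as a short corollary of \Cref{prop: variance comparison log-var vs. reinforce}, by verifying that the stronger hypotheses stated here imply the criterion \Cref{eq:comparison bound}. The reduction is essentially a sign calculation.

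The first step is to determine the sign of $\delta_i^{\text{CV}}$. From its definition in \Cref{eqn: delta_CV},
\begin{equation*}
\delta_i^{\text{CV}} = \frac{\Cov_{q_\phi}\left( f_\phi, (\partial_{\phi_i} \log q_\phi)^2 \right)}{\Var_{q_\phi}\left( \partial_{\phi_i} \log q_\phi \right)},
\end{equation*}
the denominator is strictly positive as long as the score component $\partial_{\phi_i} \log q_\phi$ is not almost surely constant, which holds whenever the variational family depends non-trivially on $\phi_i$. The assumption in \Cref{eqn: stronger criterion for VarGrad vs. Reinforce} then immediately yields $\delta_i^{\text{CV}} > 0$.

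The second step uses \Cref{lemma: optimal control variate decomposition}, which gives $\mathbb{E}_{q_\phi}[a^{\text{VarGrad}}] = -\mathrm{ELBO}(\phi)$. Under the hypothesis $\mathrm{ELBO}(\phi) \le 0$, in the generic case $\mathrm{ELBO}(\phi) < 0$ strictly we combine this with $\delta_i^{\text{CV}} > 0$ to obtain
\begin{equation*}
\frac{\delta_i^{\text{CV}}}{\mathrm{ELBO}(\phi)} < 0 < \frac{1}{2},
\end{equation*}
which is exactly the condition \Cref{eq:comparison bound}. Applying \Cref{prop: variance comparison log-var vs. reinforce} then delivers the existence of $S_0 \in \mathbb{N}$ such that the claimed variance inequality holds for all $S \ge S_0$.

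The only delicate point I expect is the boundary case $\mathrm{ELBO}(\phi) = 0$, where the ratio in \Cref{eq:comparison bound} is undefined and the leading $\mathcal{O}(1/S)$ term in the variance expansion from the proof of \Cref{prop: variance comparison log-var vs. reinforce} identically vanishes (since it factors through $\mathbb{E}[A] = -\mathrm{ELBO}(\phi)$). This is the main technical obstacle but it is highly degenerate in practice; it can either be excluded by strengthening the hypothesis to $\mathrm{ELBO}(\phi) < 0$ (which matches the motivating discrete-data setting, where $\log p(x,z) \le 0$ pointwise), or handled by revisiting the $\mathcal{O}(1/S^2)$ remainder in the expansion and checking its sign under the assumed covariance positivity. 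In either case, the bulk of the proof reduces to the two-line sign check above.
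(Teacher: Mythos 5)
Your proposal is correct and takes essentially the same route as the paper: both proofs verify the condition of \Cref{prop: variance comparison log-var vs. reinforce} (i.e.\ \Cref{eq:comparison bound}) from the hypotheses via Lemma~\ref{lemma: optimal control variate decomposition} and then invoke that proposition. The paper phrases the sign check slightly differently---it bounds the covariance above by $\E_{q_\phi}\left[\left(\partial_{\phi_i}\log q_\phi\right)^2\right]\left(\delta^{\text{CV}}_i - \tfrac{1}{2}\mathrm{ELBO}(\phi)\right)$ to conclude $\delta^{\text{CV}}_i - \tfrac{1}{2}\mathrm{ELBO}(\phi) > 0$, whereas you observe directly that $\delta^{\text{CV}}_i > 0$ and use $\mathrm{ELBO}(\phi)<0$---but these are equivalent reductions, and the degenerate case $\mathrm{ELBO}(\phi)=0$ that you flag is likewise left implicit in the paper's proof.
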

\begin{proof}
With $\mathrm{ELBO}(\phi) \le 0$ we have
\begin{align}
    \Cov\left(f_\phi, \left(\partial_{\phi_i} \log q_\phi \right)^2\right)&\le \E_{q_\phi}\left[f_\phi \left(\partial_{\phi_i} \log q_\phi \right)^2 \right] - \frac{1}{2} \E_{q_\phi}\left[f_\phi\right]\E_{q_\phi}\left[ \left(\partial_{\phi_i} \log q_\phi \right)^2 \right] \\
    &= \E_{q_\phi}\left[\left(\partial_{\phi_i} \log q_\phi \right)^2\right]\left(\delta^{\text{CV}}_i - \frac{1}{2}\mathrm{ELBO}(\phi)\right).
\end{align}
If now
\begin{equation}
    \Cov\left(f_\phi, \left(\partial_{\phi_i} \log q_\phi \right)^2\right) > 0,
\end{equation}
then also
\begin{equation}
    \delta^{\text{CV}}_i - \frac{1}{2}\mathrm{ELBO}(\phi) > 0,
\end{equation}
and the statement follows by Proposition \ref{prop: variance comparison log-var vs. reinforce}.
\end{proof}

The condition from \Cref{eqn: stronger criterion for VarGrad vs. Reinforce} gives another guarantee for VarGrad having smaller variance than the Reinforce estimator. However, we note that the converse statement is not necessarily true, i.e. if the condition does not hold, VarGrad can still be better. The advantage of \Cref{eqn: stronger criterion for VarGrad vs. Reinforce}, however, is that it can be verified more easily in certain settings, as for instance done for $D$-dimensional diagonal Gaussians in the following lemma.

\begin{lemma}[Covariance term for diagonal Gaussians]
\label{Cov(A,B^2) for diagonal Gaussians}
Let $q(z)$ and $p(z|x)$ be diagonal $D$-dimensional Gaussians with means $\mu$ and $\widetilde{\mu}$ and covariance matrices $\Sigma = \operatorname{diag}(\sigma_1^2, \dots, \sigma_D^2)$ and $\widetilde{\Sigma} = \operatorname{diag}(\widetilde{\sigma}_1^2, \dots, \widetilde{\sigma}_D^2)$. Then 
\begin{equation}
    \Cov_{q_{\phi}} \left( f_\phi ,\left( \partial_{\phi_k} \log q_\phi \right)^2\right) = \frac{1}{\widetilde{\sigma}_k^2} -  \frac{1}{\sigma_k^2}
\end{equation}
for $k \in \{1, \dots, D \}$ and 
\begin{equation}
    \Cov_{q_{\phi}} \left( f_\phi ,\left( \partial_{\phi_k} \log q_\phi \right)^2\right) = \frac{1}{{\sigma}_k^2}\left(\frac{1}{\widetilde{\sigma}_k^2} - \frac{1}{{\sigma}_k^2} \right)
\end{equation}
for $k \in \{D + 1, \dots, 2D \}$ with $\phi = (\mu_1, \dots, \mu_D, \sigma_1^2, \dots, \sigma_D^2)^\top$
and
\begin{equation}
    \Cov_{q_{\phi}} \left( f_\phi ,\left( \partial_{\phi_k} \log q_\phi \right)^2\right) = \frac{1}{\widetilde{\sigma}_k^2} - \frac{1}{{\sigma}_k^2} 
\end{equation}
for $k \in \{D + 1, \dots, 2D \}$ with $\phi = (\mu_1, \dots, \mu_D, \log \sigma_1^2, \dots, \log \sigma_D^2)^\top$.
\end{lemma}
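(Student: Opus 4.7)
The plan rests on exploiting the product structure of both densities to reduce everything to a one-dimensional Gaussian covariance calculation. Since $q_\phi(z)$ and $p(z\mid x)$ both factor over coordinates, the log-ratio admits the decomposition
\[
f_\phi(z) = \log\tfrac{q_\phi(z)}{p(z\mid x)} = \sum_{j=1}^D \psi_j(z_j) + \mathrm{const},
\]
where each $\psi_j$ is a quadratic polynomial in $z_j$ coming from the one-dimensional marginal log-densities and the marginal likelihood $\log p(x)$ is absorbed into the additive constant (which has zero covariance with anything). Moreover, for each of the three parameterisations under consideration, the parameter $\phi_k$ affects only a single coordinate $j(k)$, so $\partial_{\phi_k}\log q_\phi$ is a function of $z_{j(k)}$ alone. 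Since the $z_j$ are independent under $q_\phi$, every term with $j\ne j(k)$ contributes zero covariance, reducing the claim to
\[
\Cov_{q_\phi}\!\bigl(f_\phi,\,(\partial_{\phi_k}\log q_\phi)^2\bigr) = \Cov_{q_\phi}\!\bigl(\psi_{j(k)}(z_{j(k)}),\,(\partial_{\phi_k}\log q_\phi)^2\bigr),
\]
a one-dimensional quantity involving only the coordinate $j(k)$.

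For the one-dimensional step I would change variables to $u = (z_{j(k)} - \mu_{j(k)})/\sigma_{j(k)} \sim \NPDF(0,1)$, in which $\psi_{j(k)}$ becomes a quadratic of the form $a u^2 + b u + c$ with $a = \sigma_{j(k)}^2/(2\widetilde{\sigma}_{j(k)}^2) - 1/2$ and $b$ proportional to $\mu_{j(k)}-\widetilde{\mu}_{j(k)}$. The three scores translate to $\partial_{\mu_j}\log q_\phi = u/\sigma_j$, $\partial_{\sigma_j^2}\log q_\phi = (u^2-1)/(2\sigma_j^2)$, and $\partial_{\log \sigma_j^2}\log q_\phi = (u^2-1)/2$, whose squares are all even functions of $u$. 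Parity of the standard normal then kills the contribution of the linear $b u$ term in $\psi_{j(k)}$, and the constant $c$ drops out of any covariance. What remains is just two Gaussian moment identities, $\Cov(u^2, u^2) = 2$ and $\Cov(u^2, (u^2-1)^2) = 8$, which follow immediately from $\E[u^{2m}]\in\{1,3,15\}$ for $m=1,2,3$.

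Inserting these two numbers into the three cases and simplifying yields the three claimed formulas after cancelling the appropriate powers of $\sigma_{j(k)}$ that are carried outside the score squared. No step is conceptually deep; the main obstacle is careful bookkeeping of the indexing convention $\phi_k \leftrightarrow j(k)$ and of the $\sigma_{j(k)}$-prefactors produced when rewriting the scores in the variable $u$. The parity argument that eliminates the cross terms involving $\mu_j - \widetilde{\mu}_j$ is the only nontrivial structural observation, and it is what causes the final expressions to be independent of the means, in agreement with the statement.
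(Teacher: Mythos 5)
Your reduction is sound and, for the first two parameterisations, reproduces the stated formulas by a cleaner route than the paper takes: the paper's proof expands $\E_{q_\phi}[AB^2]$ and $\E_{q_\phi}[A]\E_{q_\phi}[B^2]$ in full, carrying all the cross-dimensional sums and letting them cancel in the difference, whereas you eliminate them at the outset via the factorisation of both densities, the independence of the coordinates under $q_\phi$, and the parity of the standard normal, so that everything collapses to the two moment identities $\Cov(u^2,u^2)=2$ and $\Cov\bigl(u^2,(u^2-1)^2\bigr)=8$, both of which are correct. Your standardised coefficient $a=\sigma_{j(k)}^2/(2\widetilde{\sigma}_{j(k)}^2)-1/2$ and the three score expressions are also correct, and the first two cases indeed come out as $1/\widetilde{\sigma}_k^2-1/\sigma_k^2$ and $\sigma_k^{-2}\bigl(1/\widetilde{\sigma}_k^2-1/\sigma_k^2\bigr)$.

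The gap is in your last step. Carried out carefully, your own method gives, for the $\log\sigma_k^2$ parameterisation, $\Cov_{q_\phi}\bigl(f_\phi,(\partial_{\log\sigma_k^2}\log q_\phi)^2\bigr)=\tfrac{8a}{4}=2a=\sigma_k^2/\widetilde{\sigma}_k^2-1=\sigma_k^2\bigl(1/\widetilde{\sigma}_k^2-1/\sigma_k^2\bigr)$, which is \emph{not} the third displayed formula of the lemma; the two agree only when $\sigma_k^2=1$. The remark about ``cancelling the appropriate powers of $\sigma_{j(k)}$'' is exactly where this factor of $\sigma_k^2$ was glossed, so you cannot assert that all three claimed formulas follow. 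Note that the value your computation produces is the one forced by the chain rule $\partial_{\log\sigma_k^2}\log q_\phi=\sigma_k^2\,\partial_{\sigma_k^2}\log q_\phi$ (an identity the paper itself records): the third covariance must equal $\sigma_k^4$ times the second, i.e.\ $\sigma_k^2/\widetilde{\sigma}_k^2-1$, so your derivation in fact exposes an inconsistency in the lemma's third display as printed, which the paper's proof does not spell out (it only says the $\log\sigma_k^2$ case ``can be recovered'' from the chain-rule identity). The correct move is to flag and resolve this discrepancy rather than claim agreement. A minor additional point: for $k\in\{D+1,\dots,2D\}$ the variances in the formulas should be read as $\sigma_{k-D}^2$ and $\widetilde{\sigma}_{k-D}^2$, consistent with your $j(k)$ bookkeeping.
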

\begin{proof}
We compute 
\begin{equation}
     f_\phi  = -\frac{1}{2}\sum_{i=1}^D \log \left(\frac{\sigma_i^2}{\widetilde{\sigma}_i^2}\right) - \frac{1}{2}\sum_{i=1}^D\frac{(z_i-\mu_i)^2}{\sigma_i^2}+\frac{1}{2}\sum_{i=1}^D\frac{(z_i-\widetilde{\mu}_i)^2}{\widetilde{\sigma}^2_i}
\end{equation}
and
\begin{equation}
   \partial_{\mu_k} \log q_\phi = \frac{z_k-\mu_k}{\sigma_k^2}. 
\end{equation}
We again use the short-cuts
\begin{equation}
A = f_\phi(z), \qquad B = \left(\partial_{\mu_k} \log q_\phi \right)(z),
\end{equation}
and obtain

\begin{align}
    &\Cov_{q_\phi}(A,B^2) = \E_{q_\phi}\left[AB^2\right] - \E_{q_\phi}[A]\E_{q_\phi}\left[B^2\right] \\
    &= \E_{q_\phi}\left[\left(-\frac{1}{2}\sum_{i=1}^D \log \left(\frac{\sigma_i^2}{\widetilde{\sigma}_i^2}\right) - \frac{1}{2}\sum_{i=1}^D\frac{(z_i-\mu_i)^2}{\sigma_i^2}+\frac{1}{2}\sum_{i=1}^D\frac{(z_i-\widetilde{\mu}_i)^2}{\widetilde{\sigma}^2_i}\right)\left( \frac{z_k-\mu_k}{\sigma_k^2}\right)^2\right] \\
    &\quad - \E_{q_\phi}\left[\left(-\frac{1}{2}\sum_{i=1}^D \log \left(\frac{\sigma_i^2}{\widetilde{\sigma}_i^2}\right) - \frac{1}{2}\sum_{i=1}^D\frac{(z_i-\mu_i)^2}{\sigma_i^2}+\frac{1}{2}\sum_{i=1}^D\frac{(z_i-\widetilde{\mu}_i)^2}{\widetilde{\sigma}^2_i}\right)\right]\E_{q_\phi}\left[\left(\frac{z_k-\mu_k}{\sigma_k^2}\right)^2\right] \\
    &= -\frac{1}{2}\left(\frac{3}{\sigma_k^2} + \frac{D-1}{\sigma_k^2}\right) + \frac{1}{2}\left(\frac{1}{{\sigma}_k^2}\sum_{\substack{i=1 \\ i \neq k}}^D \frac{\sigma_i^2 + (\mu_i - \widetilde{\mu}_i)^2}{\widetilde{\sigma}_i^2 } + \frac{1}{\sigma_k^2 \widetilde{\sigma}_k^2}\left(3\sigma_k^2 + (\mu_k - \widetilde{\mu}_k)^2 \right) \right)\\
    &\quad - \left( -\frac{D}{2} + \frac{1}{2} \sum_{i=1}^D \frac{\sigma_i^2 + (\mu_i - \widetilde{\mu}_i)^2}{\widetilde{\sigma}_i^2 }\right)\frac{1}{\sigma_k^2} \\
    &= -\frac{1}{\sigma_k^2} +\frac{1}{2\sigma_k^2\widetilde{\sigma}_k^2}\left(3\sigma_k^2 + (\mu-\widetilde{\mu})^2 \right) - \frac{1}{2\sigma_k^2\widetilde{\sigma}_k^2}\left(\sigma_k^2 + (\mu-\widetilde{\mu})^2 \right) \\
    &= \frac{1}{\widetilde{\sigma}_k^2} - \frac{1}{\sigma_k^2}.
\end{align}
For the terms with the partial derivative w.r.t. $\sigma_k^2$ we first note that
\begin{equation}
\label{eqn: partial_sigma^2_log_q}
\partial_{\sigma_k^2} \log q_\phi = -\frac{1}{2 \sigma_k^2} + \frac{(z_k-\mu_k)^2}{2\sigma_k^4} = \frac{1}{\sigma_k^2} \partial_{\log \sigma_k^2} \log q_\phi.
\end{equation}
We compute
\begin{align}
    \E_{q_\phi}\left[ AB^2\right] &= \E_{q_\phi}\Bigg[\frac{(z_k-\mu_k)^2}{4\sigma_k^6} \sum_{i=1}^D \frac{(z_i-\mu_i)^2}{\sigma_i^2} - \frac{(z_k-\mu_k)^4}{8\sigma_k^8}\sum_{i=1}^D \frac{(z_i-\mu_i)^2}{\sigma_i^2} \\
    &\quad\quad\qquad-\frac{(z_k - \mu_k)^2}{4 \sigma_k^6}\sum_{i=1}^D \frac{(z_i-\widetilde{\mu}_i)^2}{\widetilde{\sigma}_i^2} + \frac{(z_k-\mu_k)^4}{8 \sigma_k^8}\sum_{i=1}^D \frac{(z_i-\widetilde{\mu}_i)^2}{\widetilde{\sigma}_i^2} \Bigg] \\
    &= -\frac{1}{8\sigma_k^4}(D + 8) + \frac{1}{8\sigma_k^4 \widetilde{\sigma}_k^2} (9 \sigma_k^2 + (\mu_k - \widetilde{\mu}_k)^2) + \frac{1}{8\sigma_k^4} \sum_{\substack{i=1 \\ i \neq k}}^d \frac{\sigma_i^2 + (\mu_i - \widetilde{\mu_i})^2}{\widetilde{\sigma}_i^2},
\end{align}
and similarly
\begin{align}
    \E_{q_\phi}[A]\E_{q_\phi}\left[ B^2\right] = -\frac{D}{8 \sigma_k^4} + \frac{1}{8 \sigma_k^4 \widetilde{\sigma}_k^2}\left(\sigma_k^2 + (\mu_k - \widetilde{\mu}_k)^2 \right) + \frac{1}{8\sigma_k^4} \sum_{\substack{i=1 \\ i \neq k}}^D  \frac{\sigma_i^2 + (\mu_i - \widetilde{\mu}_i)^2}{\widetilde{\sigma}_i^2}.
\end{align}
We therefore get the result by again computing $\Cov_{q_\phi}(A, B^2) =  \E_{q_\phi}\left[ AB^2\right] - \E_{q_\phi}[A]\E_{q_\phi}\left[ B^2\right]$. The partial derivative w.r.t. $\log \sigma_k^2$ can be recovered from \Cref{eqn: partial_sigma^2_log_q}.
\end{proof}

\subsection{Optimal Control Variates in the Gaussian Case}

In the diagonal Gaussian case we can also easily analytically compute the optimal control variate coefficients from \Cref{eqn: optimal control variate}, along the lines of the proof of Lemma \ref{Cov(A,B^2) for diagonal Gaussians}. Our setting is again $q(z) = \mathcal{N}(z; \mu, \Sigma)$, $p(z|x) = \mathcal{N}(z; \widetilde{\mu}, \widetilde{\Sigma})$ with $\Sigma = \operatorname{diag}\left(\sigma_1^2, \dots, \sigma_D^2 \right)$, $\widetilde{\Sigma} = \operatorname{diag}\left(\widetilde{\sigma}_1^2, \dots, \widetilde{\sigma}_D^2 \right)$. In Figure \ref{fig: diag Gaussians optimal CV} we plot the variances of four different gradient estimators with varying sample size $S$, namely $\widehat{g}_\text{Reinforce}$, $\widehat{g}_\text{VarGrad}$, as well as the Reinforce estimator augmented with the optimal control variate, once computed analytically and once sampled using $S$ samples. The variance depends on the mean and the covariance matrix; here we choose $\mu_i = 3$, $\sigma_i^2 = 3$, $\widetilde{\mu}_i = 1$, $\widetilde{\sigma}_i^2 = 1$ for all $i\in\{1, \dots, D\}$.
\begin{figure}[t]
\centering
\includegraphics[width=1.0\linewidth]{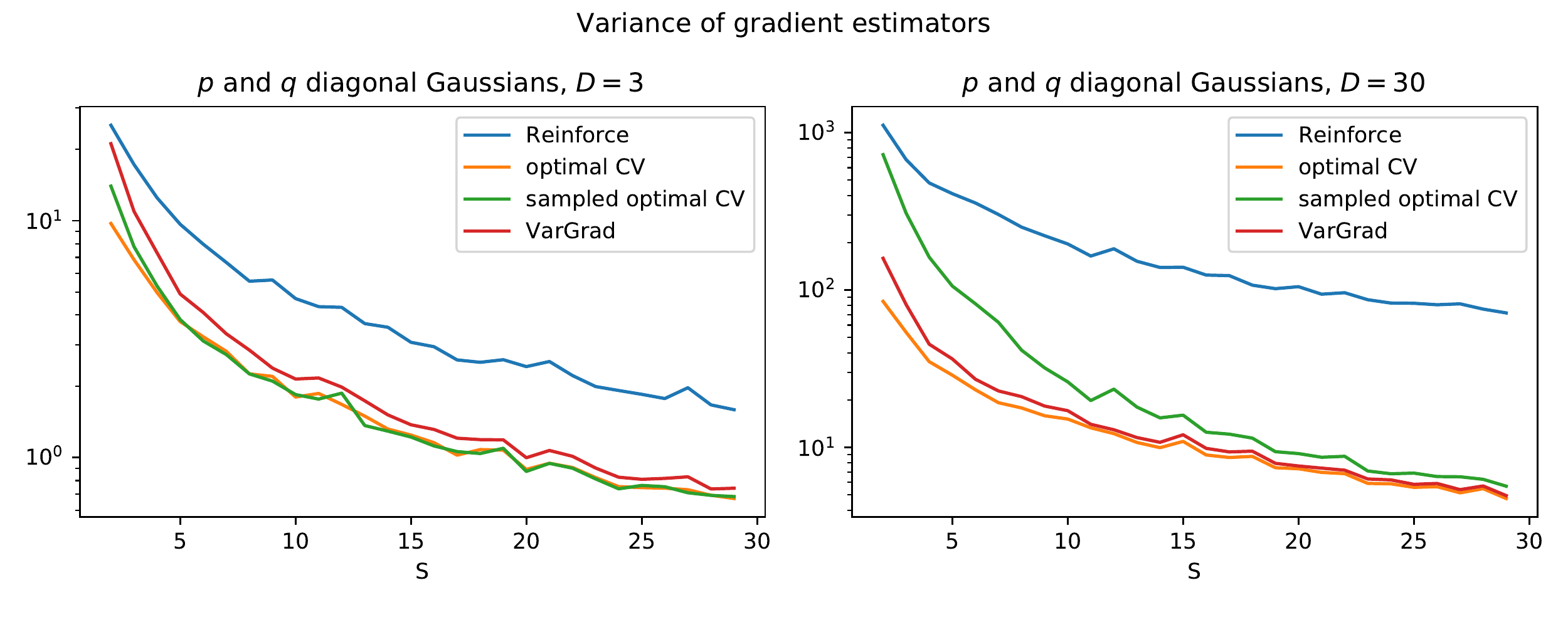}
\caption{Comparion of the variances of the different gradient estimators $\widehat{g}_\text{Reinforce}$, $\widehat{g}_\text{VarGrad}$, as well as the reinforce estimator with the optimal control variate coefficient, once computed analytically and once sampled with $S$ samples, for dimensions $D=3$ and $D=30$.}
\label{fig: diag Gaussians optimal CV}
\label{fig: computed-vs-sampled-CV-Gaussian}
\end{figure}
We observe that the VarGrad estimator is close to the analyitcal optimal control variate, and that the sampled optimal control variate performs significantly worse in a small sample size regime. These observations get more pronounced in higher dimensions and indicate that the variance of the sampled optimal control variate can itself be high, showing that using it might not always be beneficial in practice. 

Let us additionally investigate the optimal control variate correction term $\delta_i^\text{CV}$ as defined in \Cref{eqn: delta_CV} for $D$-dimensional Gaussians $q(z)$ and $p(z|x)$ as considered above. In Figure \ref{fig: delta-vs-KL-diag-Gaussians} we display the variances, means and relative errors of $\delta_i^\text{CV}$ and $a^{\text{VarGrad}} = \bar{f}_\phi$ and realise that indeed the ratio of those two converges to zero when $D$ gets larger. Furthermore we notice that the relative error of $\delta^\text{CV}$ increases with the dimension, explaining the difficulties when estimating the optimal control variate coefficients from samples. Finally, we plot a histogramm of $\delta^{\text{CV}}_i$ (varying across $i$) in \Cref{fig: delta-histogram-diagonal-Gaussians}, showing that $\delta^{\text{CV}}_i$ is small in comparison to $\E\left[a^{\text{VarGrad}}\right]$ and distributed around zero. 

\begin{figure}[t]
\centering
\includegraphics[width=1\linewidth]{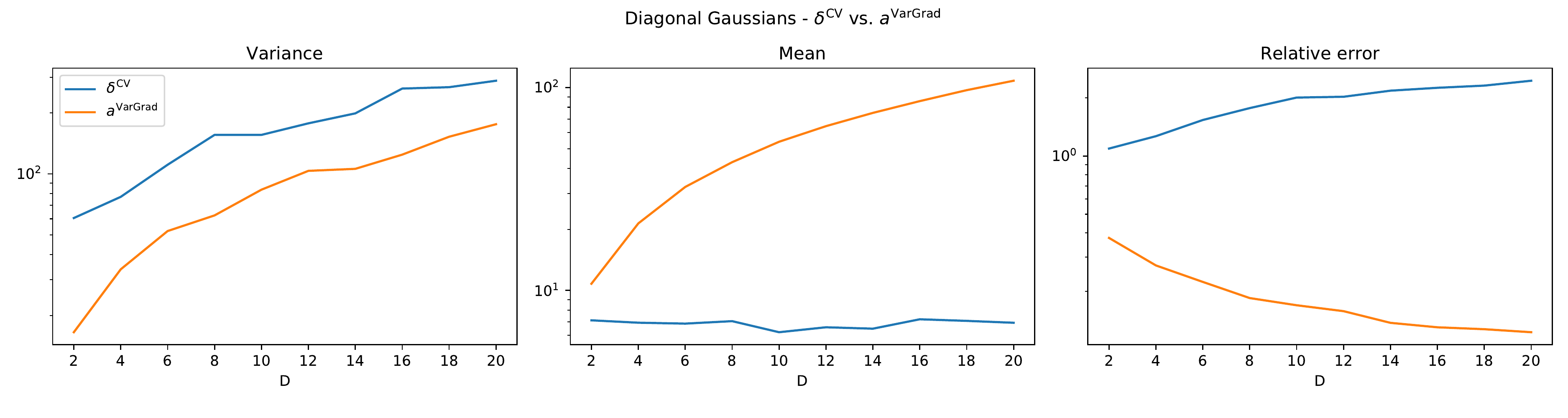}
\caption{Mean, variance and relative errors associated to the two contributions to the optimal control variate coefficient, $\delta_i^\text{CV}$ and $a^{\text{VarGrad}} = \bar{f}_\phi$.}
\label{fig: delta-vs-KL-diag-Gaussians}
\end{figure}

\begin{figure}[t]
\centering
\includegraphics[width=0.5\linewidth]{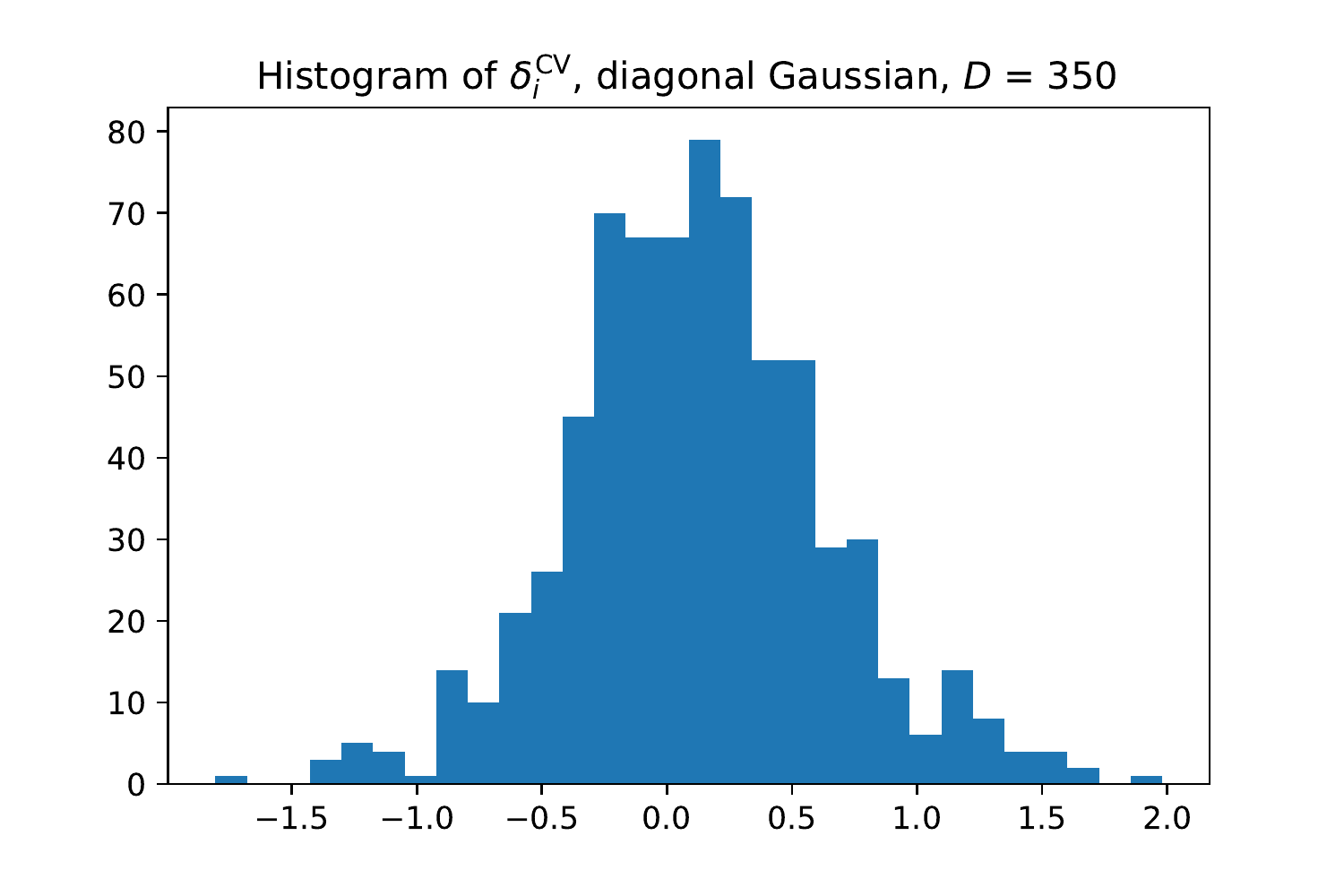}
\caption{The histogram of $\delta_i^\text{CV}$ shows that it is usually rather small in comparison to $\E\left[a^{\text{VarGrad}}\right]$, which is roughly $700$ here, and that it fluctuates around zero.}
\label{fig: delta-histogram-diagonal-Gaussians}
\end{figure}

\section{Connections to Other Divergences}
\label{appendix: other divergences}

The Reinforce gradient estimator from \Cref{eq:score_function} can as well be derived from the \textit{moment loss}
\begin{equation}
    \mathcal{L}_r^{\text{moment}}(q_\phi(z)\| p(z|x)) = \frac{1}{2} \E_{r(z)}\left[\log^2 \left(\frac{q_\phi(z)}{p(z|x)}\right)\right],
\end{equation}
namely
    \begin{equation}
        \nabla_\phi \mathcal{L}_r^{\text{moment}}(q_\phi(z)\| p(z|x)) \Big|_{r = {q_\phi}} = \E_{q_\phi}\left[\log \left(\frac{q_\phi(z)}{p(z|x)}\right) \nabla_\phi \log q_{\phi}(z)  \right].
    \end{equation}
    
In the log-variance loss, one can omit the logarithm to obtain the \textit{variance loss}
\begin{equation}
    \mathcal{L}_r^{\Var}(p(z|x) \| q_\phi(z)) = \frac{1}{2} \Var_{r(z)}\left(\frac{p(z|x)}{q_\phi(z)} \right) = \frac{1}{2} \E_{r(z)}\left[\left(\frac{p(z|x)}{q_\phi(z)} \right)^2 - 1 \right],
\end{equation}
which with $r = q_\phi$ coincides with the $\chi^2$-divergence. The potential of using the latter in the context of variational inference was suggested in \cite{dieng2017variational}. We note that again one is in principle free in choosing $r(z)$, but that unlike the log-variance loss, this loss in not symmetric with respect to $q_\phi(z)$ and $p(z|x)$. An analysis in \cite{nusken2020solving} (for distributions on path space) however suggests that the variance loss (unlike the log-variance loss) scales unfavourably in high-dimensional settings in terms of the variance associated to standard Monte Carlo estimators.  
\end{document}